\documentclass[english,envcountsame]{llncs}
\usepackage[T1]{fontenc}
\usepackage[latin9]{inputenc}
\synctex=-1
\usepackage{float}
\usepackage{textcomp}
\usepackage{amsmath}
\usepackage{amssymb}
\usepackage{stmaryrd}
\usepackage{setspace}
\onehalfspacing

\makeatletter

\newcommand{\lyxmathsym}[1]{\ifmmode\begingroup\def\b@ld{bold}
  \text{\ifx\math@version\b@ld\bfseries\fi#1}\endgroup\else#1\fi}

\newenvironment{lyxlist}[1]
	{\begin{list}{}
		{\settowidth{\labelwidth}{#1}
		 \setlength{\leftmargin}{\labelwidth}
		 \addtolength{\leftmargin}{\labelsep}
		 }}
	{\end{list}}

\emergencystretch 3em

\usepackage{xcolor}
\definecolor{darkred}{rgb}{0.8,0,0} 
\usepackage{hyperref}
\hypersetup{final, colorlinks=true, linkcolor=black, urlcolor=black, citecolor=black, linkbordercolor=black, pdfborderstyle={/S/U/W 1}}
\usepackage{setspace}
\usepackage[absolute,overlay]{textpos} 
\usepackage{listings}
\usepackage{cite}

\makeatletter
\g@addto@macro \normalsize {%
 \setlength\abovedisplayskip{4pt plus 2pt minus 2pt}%
 \setlength\belowdisplayskip{4pt plus 2pt minus 2pt}%
}
\makeatother

\DeclareRobustCommand{\VAN}[2]{#2}

\usepackage{nameref}
\makeatletter
\let\orgdescriptionlabel\descriptionlabel
\renewcommand*{\descriptionlabel}[1]{%
  \let\orglabel\label
  \let\label\@gobble
  \phantomsection
  \edef\@currentlabel{#1}%
  \let\label\orglabel
  \orgdescriptionlabel{#1}%
}
\makeatother

\makeatletter
\newcommand*{\textlabel}[2]{%
  \edef\@currentlabel{#1}
  \phantomsection
  #1\label{#2}
}
\makeatother

\usepackage{textcomp}
\usepackage{eurosym}

\usepackage{tikz}
\usetikzlibrary{positioning,arrows,calc,fit,automata}
\usepackage{xcolor}

\newcommand{\s}{\scriptstyle}

\newcommand{\cA}[1]{\textcolor{cyan!50!green!85!white}{#1}} 
\newcommand{\dA}[1]{\dot{\textcolor{cyan!50!green!85!white}{#1}}} 

\newcommand{\cB}[1]{\textcolor{violet!85!white}{#1}} 
\newcommand{\dB}[1]{\dot{\textcolor{violet!85!white}{#1}}} 

\newcommand{\cC}[1]{\textcolor{orange}{#1}} 
\newcommand{\dC}[1]{\dot{\textcolor{orange}{#1}}} 




\usepackage{nccmath}
\usepackage{mathtools}

\AtBeginDocument{
  
}

\makeatother

\usepackage{babel}
\begin{document}
\setstretch{1}
\title{Epistemic Planning with Attention \\as a Bounded Resource}
\author{Gaia Belardinelli and Rasmus K. Rendsvig \institute{Center for Information and Bubble Studies, University of Copenhagen \\ \email{\{belardinelli,rasmus\}@hum.ku.dk}}}
\maketitle
\begin{abstract}
Where information grows abundant, attention becomes a scarce resource.
As a result, agents must plan wisely how to allocate their attention
in order to achieve epistemic efficiency. Here, we present a framework
for multi-agent epistemic planning with attention, based on Dynamic
Epistemic Logic (DEL, a powerful formalism for epistemic planning).
We identify the framework as a fragment of standard DEL, and consider
its plan existence problem. While in the general case undecidable,
we show that when attention is required for learning, all instances
of the problem are decidable.
\end{abstract}

\section{Introduction}

The development of autonomous agents is central in artificial intelligence.
A core feature of autonomous agents is their ability to exhibit goal-directed
behaviour, i.e. to commit to goals and generate plans to achieve them.\emph{
Epistemic planning} \cite{Bolander2017} focuses on domains where
agents' plans must take into account their own capabilities and knowledge,
as well as knowledge about other agents' knowledge.\,For example,
in an \emph{epistemic planning problem}, agent $a$ may have as goal
\textquotedblleft $a$ knows the truth-value of $\varphi$, while
$b$ does not know that $a$ knows it\textquotedblright . To achieve
this goal, agent $a$ may need to reason about what it can do to learn
$\varphi$, and about what $b$ may learn about $\varphi$ from $a$\textquoteright s
actions.\,\emph{Dynamic Epistemic Logic} (DEL) offers a highly expressive
basis for epistemic planning, allowing e.g. nondeterminism, partial
observability and arbitrary levels of higher-order reasoning\,\cite{Bolander2017}.

Attention is relevant to autonomous agents that labour in information-rich
environments. As H. Simon wrote in \cite{simon1971}: ``In an information-rich
world, the wealth of information means a dearth of something else:
a scarcity of whatever it is that information consumes. What information
consumes is rather obvious: it consumes the attention of its recipients.
Hence a wealth of information creates a poverty of attention, and
a need to allocate that attention efficiently (...).\textquotedblright{}
Attention is thus a bounded resource, crucial to agents that must
process information to achieve their goals. In such cases, plans must
factor in attention and be attention-feasible: If agents undertake
actions that require more attention than is available, they will fail.
Autonomous agents in multi-agent systems must then turn to epistemic
planning with attention as a bounded resource.

However, DEL does not assume resource bounded agency, and so DEL-based
epistemic planning does not bound the attention agents may consume
to achieve epistemic goals. Therefore, we propose an alteration of
DEL that incorporates attention as a bounded resource in epistemic
planning. 

In the proposed model, attention is used in learning. The model is
of the `DEL-type' by its `state-action-product' format, but with
a twist. \emph{Attention states }portray static snapshots of agents'
information, with agents' remaining attention encoded propositionally.
Attention states are updated with \emph{attention actions,} that have
two elements each: an \emph{action model} interpreted as an \emph{information
source}, and, for each agent, a \emph{question} asked of the information
source (this is the twist). The question, a formula, is what is being
paid attention to. Jointly, action model and questions invoke attention
cost and information change, calculated by taking a state-action product.
Sec.~\ref{sec:Attention} introduces and exemplifies this.

In Sec.~\ref{sec:Emulation-Results}, we relate to DEL in three ways.
We show that attention states may be recast as DEL epistemic states
and \emph{vice versa}; that every DEL action \emph{without} postconditions
may be emulated by an attention action, but not \emph{vice versa};
and that every attention action may be emulated by a DEL action \emph{with}
postconditions, but not \emph{vice versa}. We use these results in
Sec.~\ref{sec:Planning}.

In Sec.~\ref{sec:Planning}, we turn to planning and our main results:
The \emph{plan existence problem} for epistemic planning with attention
is in general undecidable, but is decidable for \emph{No Free Lunch}
(\textsc{nfl}) actions. In \textsc{nfl} actions, complex learning
requires questioning, and all non-trivial questions have non-zero
attention cost. N\textsc{fl} actions are of special interest as they
enforce that attention expenditure is required for active learning.
The first result is a corollary to the undecidability of DEL-based
planning without postconditions \cite{BolanderBirkegaard2011} and
results in Sec.~\ref{sec:Emulation-Results}. The second is a consequence
of bounded attention. When attention is exhausted, the \textsc{nfl}
actions supply a restricted form of background information, ensuring
that eventually no further change occurs. 

Sec.~\ref{sec:Final-Remarks} concludes, discussing how our decidability
result differs from those established in DEL-based epistemic planning,
and future research questions.

\medskip{}

\noindent To the best of the authors' knowledge, the paper's focus
on attention as a bounded resource in epistemic planning is novel.
Some works in the literature relate to different aspects of the present
proposal. \cite{Balbiani2019} proposes a model where explicit beliefs
(interpreted as the focus of agents attention) may be boundedly extended
by perception or inference from memory, using a DEL variant. \cite{Solaki2018}
use DEL to model cognitively costly inference for single agents, using
impossible worlds. Both proposals do not relate to planning. Works
on time-bounded reasoning, where inference requires time, may be interpreted
in terms of attention \cite{Alechina2004}. These models do not use
DEL.
{} Some DEL papers \cite{van2013,Bolander2016} model agents that pay
attention or not, affecting whether they learn, and \cite{Lorini2005}
discusses \emph{joint attentional states. }Neither work captures attention
as a\emph{ resource}. Some works on awareness draw parallels with
attention \cite{Schipper2014}, but without focusing on resource boundedness.
More peripherally lies the field of \emph{attention economics} \cite{hefti2015economics}
which studies attention allocation in markets, but it does not represent
agents' epistemic states, contra DEL.

The proof of some propositions is only sketched in the main text.
Their full version can be found in the Proof Appendix.

\section{\label{sec:Attention}Attention in States and Actions}

Attention may be conceptualized in several ways. It may be equated
with the time, memory or `mental energy' required to learn some
proposition. To accommodate this broad scope, we represent agents'
attention simply as a numerical value with attention costs relative
to agent, question and context (event).

\vspace{-5pt}

\subsection{Language}

Throughout, let \mbox{$I\not=\emptyset$} be a finite set of \emph{agents},
let \mbox{$N\in\mathbb{N}$} be an \emph{attention bound}. We use
a classic epistemic logical language,\,but\,with a bi-partitioned
set of atoms:

Let $\Phi$ be a countable set of \emph{proposition atoms} and let
$\Psi=\{(\alpha_{i}<n),(\alpha_{i}=n)\colon n\leq N,i\in I\}$ be
a set of \emph{attention atoms}. Let the full set of \emph{atoms}
be their disjoint union $At=\Phi\uplus\Psi$. With $i\in I$, $p\in At$,
define the language $\mathcal{L}$ by
\[
\varphi::=\top\mid p\mid\neg\varphi\mid\varphi\wedge\varphi\mid K_{i}\varphi
\]
The formula $K_{i}\varphi$ states that agent $i$ knows that $\varphi$
and the attention atoms $(\alpha_{i}<n)$ and $(\alpha_{i}=n)$ state
that $i$ has, respectively, strictly less than $n$ or exactly $n$,
attention left, where $n$ is a natural number.

As abbreviations, let $(\alpha_{i}>n):=\neg(\alpha_{i}<n)\wedge\neg(\alpha_{i}=n)$
and $(\alpha_{i}\ge n):=\neg(\alpha_{i}<n)$. In figures, we denote
$\neg\varphi$ by $\overline{\varphi}$.

\vspace{-5pt}

\subsection{Attention States}

We use special cases of Kripke models, augmented with maps to quantify
the attention span of each agent at each world (examples are in Sec.~\ref{sec:Attention Actions}).
Sec.~\ref{sec:Emulation-Results} shows them recast as Kripke models
with valuations suitable for attention.
\begin{definition}
\label{def: attention model} An \emph{attention model} is an $M=(W,R,V,A)$
where $W\neq\emptyset$ is a finite set of worlds; $R:I\rightarrow\mathcal{P}(W^{2})$
assigns each $i\in I$ an equivalence relation $R_{i}$; $V:At\rightarrow\mathcal{P}(W)$
is a valuation\emph{; and $A:I\times W\rightarrow\{0,...,N\}$ is
an }attention resource function \emph{satisfying that $wR_{i}v$ implies
$A_{i}(w)=A_{i}(v)$, for all $i\in I$. For $w\in W$, $(M,w)$ is
an }attention state\emph{, with $w$ the }actual world\emph{.}
\end{definition}

\textit{\emph{Relations $R_{i}$ are taken to capture the indistinguishability
of worlds for agent $i$, and are therefore assumed to be equivalence
relations, as standard in epistemic logic \cite{sep-logic-epistemic}.
The restriction on $A_{i}$ ensures that $i$ knows their own attention
span.}}
\begin{definition}
\label{def: semantics attention language}Let $(M,w)=((W,R,V,A),w)$
be an attention state. For $i\in I$, $n\in\mathbb{N}$, $p\in\Phi$,
$(\alpha_{i}<n),(\alpha_{i}=n)\in\Psi$, satisfaction of \textup{$\mathcal{L}$}
formulas is given by

$M,w\vDash p$ \quad{}iff \textup{$w\in V(p)$} for all $p\in\Phi$,

$M,w\vDash(\alpha_{i}=n)$ \quad{}iff $A_{i}(w)=n$,

$M,w\vDash(\alpha_{i}<n)$ \quad{}iff $A_{i}(w)<n$,

$M,w\vDash K_{i}\varphi$ \quad{}iff $wR_{i}v$ implies $M,v\vDash\varphi$

\noindent with standard clauses for $\top$, $\neg\varphi$ and $\varphi\wedge\psi$,
where $\varphi,\psi\in\mathcal{L}$. If for all $(M,w)$, $(M,w)\vDash\varphi\rightarrow\psi$,
we write $\varphi\vDash\psi$.
\end{definition}

Our results rely on establishing equivalence of models, for which
we need to introduce \emph{attention bisimulation}. Attention bisimulation
is defined akin to bisimulation for Kripke models, but factoring attention
into the Atoms clause.
\begin{definition}
\label{def:attention bisimulation} Any two attention states $M=((W,R,V,A),w)$
and $M'=((W',R',V',A'),w')$ are (attention)\emph{ bisimilar} (written
\emph{$(M,w)\leftrightarroweq(M',w')$)} if there exists a relation
$Z\subseteq W\times W'$ such that \textsc{$wZw'$} and for all $v\in W$,
$v'\in W'$, if \textsc{$vZv'$}, then for all $p\in\Phi,i\in I$,

\noindent \textbf{Atoms~} $v\in V(p)\Leftrightarrow v'\in V'(p)$
and $A_{i}(v)=A'_{i}(v')$;

\noindent \textbf{Forth\enskip{}} if $vR_{i}u$, then for some $u'\in W'$,
\textup{$v'R'_{i}u'$} and $uZu'$;

\noindent \textbf{Back \enskip{}} if $v'R'_{i}u'$, then for some
$u\in W$, $vR_{i}u$ and $u'Zu$.
\end{definition}

Bisimulation between attention states implies modal equivalence:
\begin{proposition}
\label{prop:Hennessy-Milner}If attention states $(M,w)$ and $(M',w')$
are bisimilar, then for every $\varphi\in\mathcal{L}$, $(M,w)\vDash\varphi$
iff $(M',w')\vDash\varphi$.
\end{proposition}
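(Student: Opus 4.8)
The plan is to prove this by structural induction on the formula $\varphi$, following the standard Hennessy--Milner argument but with the attention atoms handled explicitly in the base case. I assume a bisimulation $Z$ witnessing $(M,w)\leftrightarroweq(M',w')$, so that $wZw'$ and the \textbf{Atoms}, \textbf{Forth} and \textbf{Back} conditions hold throughout. The claim I actually prove is slightly stronger than the stated one: I show that for \emph{every} pair $vZv'$ and every $\varphi\in\mathcal{L}$, $(M,v)\vDash\varphi$ iff $(M',v')\vDash\varphi$. The proposition then follows by instantiating at the witnessed pair $wZw'$.

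For the base cases I would split the atoms according to the bi-partition $At=\Phi\uplus\Psi$. For $p\in\Phi$, the equivalence $(M,v)\vDash p \Leftrightarrow (M',v')\vDash p$ is exactly the first half of the \textbf{Atoms} clause, combined with the semantic clause $M,v\vDash p$ iff $v\in V(p)$. For the attention atoms, I would use the second half of \textbf{Atoms}, namely $A_i(v)=A'_i(v')$: since $(M,v)\vDash(\alpha_i=n)$ iff $A_i(v)=n$ and likewise for $M'$, equal attention values give the equivalence immediately, and the case $(\alpha_i<n)$ is identical using $<$ in place of $=$. The case $\top$ is trivial. The Boolean cases $\neg\varphi$ and $\varphi\wedge\psi$ are the routine propagation of the induction hypothesis through the standard semantic clauses.

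The only genuinely modal case is $K_i\varphi$, and this is where the work lies. Suppose $(M,v)\vDash K_i\varphi$; I must show $(M',v')\vDash K_i\varphi$, i.e. that $v'R'_iu'$ implies $(M',u')\vDash\varphi$. Given such a $u'$, the \textbf{Back} condition supplies a $u\in W$ with $vR_iu$ and $u'Zu$; from $(M,v)\vDash K_i\varphi$ and $vR_iu$ I get $(M,u)\vDash\varphi$, and since $uZu'$ (as $Z$ relates the pair) the induction hypothesis transfers this to $(M',u')\vDash\varphi$. The converse direction, from $(M',v')\vDash K_i\varphi$ to $(M,v)\vDash K_i\varphi$, is symmetric and uses \textbf{Forth} instead. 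I should take a small care point here: the \textbf{Back} clause as written produces $u'Zu$ rather than $uZu'$, so I would either read $Z$ as used symmetrically in the pairing or note that the induction hypothesis is stated for pairs in $Z$ in the order they are related; this bookkeeping is the main (minor) obstacle, and otherwise the argument is the classical one.
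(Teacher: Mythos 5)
Your proof is correct, but it takes a genuinely different route from the paper's. You give the classical direct argument: structural induction on $\varphi$, proving the stronger statement that modal equivalence holds at \emph{every} pair related by $Z$, with the attention atoms absorbed into the base case via the clause $A_{i}(v)=A'_{i}(v')$ and the $K_{i}$ case handled by \textbf{Forth}/\textbf{Back}. The paper instead proves nothing by induction at all: it observes that an attention bisimulation $Z$ between $(M,w)$ and $(M',w')$ is also a standard Kripke bisimulation between the Kripke renditions $K(M,w)$ and $K(M',w')$ (because $A_{i}(v)=A'_{i}(v')$ is exactly agreement on the attention atoms under the induced valuation $V_{A}$), and then invokes Prop.~\ref{prop:K(M)-M-Equivalence } together with the off-the-shelf Hennessy--Milner theorem for Kripke models. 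The paper's route is three lines and reuses machinery it needs anyway, but at the cost of a forward dependency: the proposition is stated in Sec.~2 yet can only be proved after the emulation result of Sec.~3. Your route is self-contained, needs neither the Kripke rendition nor the external theorem, and could be placed where the proposition is stated; it also makes explicit the strengthened induction hypothesis (equivalence at all $Z$-related pairs) that the textbook proof silently relies on. Your side remark about the \textbf{Back} clause is well taken: as printed it asserts $u'Zu$ although $Z\subseteq W\times W'$, so it must be read as $uZu'$; flagging and resolving that is the right call and does not affect the argument.
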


\noindent The proof follows Prop.~\ref{prop:K(M)-M-Equivalence }
(Sec.~\ref{sec:Emulation-Results}), of which it makes use.

Finally, in showing our main decidability result, Theorem \ref{thm:NFL-planning-is-decidable},
we use bisimulation contractions and the following Lemma.
\begin{definition}
\label{def:bisim.contraction}Let $(M,w)=((W,R,V,A),w)$ be an attention
model. The \emph{bisimulation contraction} of $(M,w)$ is the attention
model $(M,w)_{\leftrightarroweq}=((W',R',V',A'),[w])$ with $W'=\{[w]\colon w\in W\}$
for $[w]=\{v\in M\colon(M,w)\leftrightarroweq(M,v)\}$; for all $i\in I$,
$R_{i}'=\{([w],[v])\colon\exists w'\in[w],\exists v'\in[v]\text{ with }w'R_{i}v'\}$;
$V':\Phi\rightarrow\mathcal{P}(W)$ with $V'(p)=\{[w]\in W\colon w\in V(p)\}$
for all $p\in\Phi$; and $A'_{i}([w])=A_{i}(w)$, for all $w\in W$.
\end{definition}

\begin{lemma}
\label{lem:bisim.contraction.equivalence}For any attention state
$(M,w)$, for all $\varphi\in\mathcal{L}$, $(M,w)_{\leftrightarroweq}\vDash\varphi$
iff $(M,w)\vDash\varphi$.
\end{lemma}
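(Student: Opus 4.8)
The plan is to prove the lemma by establishing that the bisimulation contraction $(M,w)_{\leftrightarroweq}$ is itself attention bisimilar to $(M,w)$, and then invoke Proposition~\ref{prop:Hennessy-Milner} to obtain modal equivalence for all $\varphi\in\mathcal{L}$. This reduces the present lemma to a single bisimilarity claim, which is the natural strategy since bisimilarity has already been shown to imply $\mathcal{L}$-equivalence.

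First I would define the candidate bisimulation $Z\subseteq W\times W'$ between $(M,w)$ and $(M,w)_{\leftrightarroweq}$ by setting $vZ[u]$ iff $v\in[u]$, i.e. iff $(M,v)\leftrightarroweq(M,u)$. By reflexivity of attention bisimulation we have $wZ[w]$, establishing the linking condition on the actual worlds. It then remains to verify the three clauses of Definition~\ref{def:attention bisimulation}. For \textbf{Atoms}, suppose $vZ[u]$, so $v\in[u]$; I would unfold Definition~\ref{def:bisim.contraction} to see that $[u]\in V'(p)$ iff some representative of $[u]$ is in $V(p)$, and since all members of $[u]$ are mutually bisimilar, they agree on all atoms in $\Phi$ and on all attention values, so $v\in V(p)\Leftrightarrow[u]\in V'(p)$ and $A_i(v)=A'_i([u])$ as required. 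For \textbf{Forth}, if $vR_iv''$ then, since $v\in[u]$, the definition of $R'_i$ yields $[u]R'_i[v'']$ with $v''Z[v'']$. For \textbf{Back}, if $[u]R'_i[t]$, then by definition there exist $u'\in[u]$ and $t'\in[t]$ with $u'R_it'$; since $v,u,u'$ are all in $[u]$ and hence mutually bisimilar, I would use the \textbf{Forth} condition of the bisimulation witnessing $(M,v)\leftrightarroweq(M,u')$ to transport the $R_i$-edge from $u'$ back to a successor $t''$ of $v$ with $t''$ bisimilar to $t'$, whence $t''\in[t]$ and $t''Z[t]$.

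The step I expect to be the main obstacle is the \textbf{Back} clause, precisely because $R'_i$ in the contraction is defined existentially — an edge $[u]R'_i[t]$ only guarantees some pair of representatives is $R_i$-related, not that the specific representative $v$ has a matching successor. Closing this gap requires carefully chaining the bisimilarity of $v$ and $u'$ (both in $[u]$) with the $R_i$-transition out of $u'$, which is where the equivalence-relation structure of attention bisimulation and the symmetry/transitivity of $\leftrightarroweq$ do the real work; the remaining clauses are essentially bookkeeping against the definitions.

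Once $Z$ is verified to be an attention bisimulation, the lemma follows immediately: Proposition~\ref{prop:Hennessy-Milner} gives $(M,w)_{\leftrightarroweq}\vDash\varphi$ iff $(M,w)\vDash\varphi$ for every $\varphi\in\mathcal{L}$.
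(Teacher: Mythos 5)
Your proposal is correct and follows essentially the same route as the paper's own proof: exhibit $Z=\{(v,[v])\colon v\in W\}$ as an attention bisimulation between $(M,w)$ and $(M,w)_{\leftrightarroweq}$ and conclude via Proposition~\ref{prop:Hennessy-Milner}. Your careful treatment of the \textbf{Back} clause (transporting the $R_i$-edge from an arbitrary representative $u'$ back to the given $v$ via the bisimulation witnessing $v,u'\in[u]$) is in fact more rigorous than the paper's appendix, which glosses over exactly that point.
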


\begin{proof}
[Proof sketch] $(M,w)_{\leftrightarroweq}$ is bi-similar to $(M,w)$,
witnessed by $Z=\{(w,[w])\colon w\in W\}$. The conclusion then follows
by Prop.~\ref{prop:Hennessy-Milner}.
\end{proof}

\vspace{-10pt}

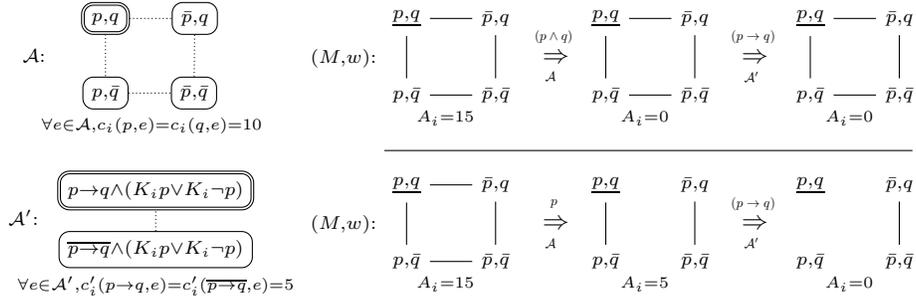
\begin{figure}[h]
\begin{center}
	
	\begin{tikzpicture}\tikzset{event/.style={rectangle,  draw=black, thin, rounded corners, text centered, node distance=16pt}, world/.style={semithick, node distance=16pt}}


True state:
\node [world] (!) {$\s \underline{p,q}$};

\node [world, right=of !](npqB) {$\s \bar{p},q$};
\node [world, below=of !](pnq) {$\s p,\bar{q}$};
\node [world, right=of pnq] (npnq) {$\s \bar{p},\bar{q}$};

name of the model
\node [world, left=of !,  yshift=-15pt , xshift=17pt]  (name) {$\s (M,w):$};

Relations
\draw (!) -- (npqB); 
\draw (!) -- (pnq);
\draw (pnq) -- (npnq); 
\draw  (npnq) -- (npqB); 

Attention span

\node [world, below=of npnq, yshift=20pt, xshift=-19] (A) {\footnotesize$\s A_i=15$};

\node [world, right=of !, xshift=23pt, yshift=-15] (u) {$\Rightarrow$};
\node [world, right=of !, xshift=20pt, yshift=-7] (u) {\scalebox{0.5}[0.5]{$(p\wedge q)$}};
\node [world, right=of !, xshift=24pt, yshift=-22] (u) {\scalebox{0.5}[0.5]{$\mathcal{A}$}};

ATTENTION ACTION MODEL 1, top left

\node [event, accepting, left=of !, xshift= -80pt]  (!) {$\s p,q$};

\node [world, below=of !, yshift=14pt, xshift=-27pt]  (name) {$\s \mathcal{A}:$};

\node [event, right=of !](npq) {$\s \bar{p},q$};
\node [event, below=of !](pnq) {$\s p,\bar{q}$};
\node [event, right=of pnq] (npnq) {$\s \bar{p},\bar{q}$};

Cost
\node [below=of npnq, yshift=29pt, xshift=-16] (A) {\footnotesize$\s \forall e\in\mathcal{A}, c_i(p,e)=c_i(q,e)=10$};

Q*Relations
\draw [densely dotted] (!) -- (npq); 
\draw [densely dotted] (!) -- (pnq);
\draw [densely dotted](pnq) -- (npnq); 
\draw  [densely dotted] (npnq) -- (npq);



ATTENTION ACTION MODEL 2, top right
\node [event, below=of pnq, yshift=-30pt, xshift= 19pt] (!) {$\s \overline{ p\rightarrow q} \wedge  (K_ip\vee K_i\neg p)$};

\node [event, accepting, right=of npq, above=of !, yshift=-8pt](npqA) {$\s p\rightarrow q\wedge (K_ip\vee K_i\neg p)$};

Q*Relations
\draw  [densely dotted] (!) -- (npqA);

Cost
\node [below=of !, yshift=29pt] (A) {\footnotesize$\s \forall e\in\mathcal{A'}, c'_i(p\rightarrow q, e)=c'_i( \overline{ p\rightarrow q},e )=5$};

\node [world, left= of A,  xshift=31pt, yshift=26pt]  (name) {$\s \mathcal{A}':$};

True state:
\node [world, right=of npqB, xshift=8pt] (!) {$\s \underline{p,q}$};

\node [world, right=of !](npq) {$\s \bar{p},q$};
\node [world, below=of !](pnq) {$\s p,\bar{q}$};
\node [world, right=of pnq] (npnq) {$\s \bar{p},\bar{q}$};

Attention span
\node [world, below=of npnq, yshift=20pt, xshift=-19] (A) {\footnotesize$\s A_i=0$};

Relations
\draw (!) -- (npq); 
\draw (!) -- (pnq);
\draw (pnq) -- (npnq); 
\draw  (npnq) -- (npq); 

\node [world, right=of !, xshift=23pt, yshift=-15] (u) {$\Rightarrow$};
\node [world, right=of !, xshift=19pt, yshift=-7] (u) {\scalebox{0.5}[0.5]{$(p\rightarrow q)$}};
\node [world, right=of !, xshift=24pt, yshift=-22] (u) {\scalebox{0.5}[0.5]{$\mathcal{A}'$}};

\node [world, right=of npq, xshift=10pt] (!) {$\s \underline{p,q}$};

\node [world, right=of !](npq) {$\s \bar{p},q$};
\node [world, below=of !](pnq) {$\s p,\bar{q}$};
\node [world, right=of pnq] (npnq) {$\s \bar{p},\bar{q}$};

\node [world, below=of npnq, yshift=20pt, xshift=-19] (A) {\footnotesize$\s A_i=0$};

\draw (!) -- (npq); 
\draw (!) -- (pnq);
\draw (pnq) -- (npnq); 
\draw  (npnq) -- (npq); 


\node [world, yshift=-50, xshift=-12pt](anchor1) {};
\node [world, yshift=-50, xshift=195pt](anchor2) {};
\draw (anchor1) -- (anchor2);


\node [world, yshift=-63pt](!) {$\s \underline{p,q}$};

\node [world, right=of !](npq) {$\s \bar{p},q$};
\node [world, below=of !](pnq) {$\s p,\bar{q}$};
\node [world, right=of pnq] (npnq) {$\s \bar{p},\bar{q}$};

Attention span
\node [world, below=of npnq, yshift=20pt, xshift=-19] (A) {\footnotesize$\s A_i=15$};

name of the model
\node [world, left=of !,  yshift=-15pt , xshift=17pt]  (name) {$\s (M,w):$};

Relations
\draw (!) -- (npq); 
\draw (!) -- (pnq);
\draw (pnq) -- (npnq); 
\draw  (npnq) -- (npq);

\node [world, right=of !, xshift=23pt, yshift=-15] (u) {$\Rightarrow$};
\node [world, right=of !, xshift=26pt, yshift=-7] (u) {\scalebox{0.5}[0.5]{$p$}};
\node [world, right=of !, xshift=24pt, yshift=-22] (u) {\scalebox{0.5}[0.5]{$\mathcal{A}$}};

True state:
\node [world, right=of npq, xshift=8pt] (!) {$\s \underline{p,q}$};

\node [world, right=of !](npq) {$\s \bar{p},q$};
\node [world, below=of !](pnq) {$\s p,\bar{q}$};
\node [world, right=of pnq] (npnq) {$\s \bar{p},\bar{q}$};

Attention span
\node [world, below=of npnq, yshift=20pt, xshift=-19] (A) {\footnotesize$\s A_i=5$};

Relations
\draw (!) -- (pnq);
\draw  (npnq) -- (npq); 

\node [world, right=of !, xshift=23pt, yshift=-15] (u) {$\Rightarrow$};
\node [world, right=of !, xshift=19pt, yshift=-7] (u) {\scalebox{0.5}[0.5]{$(p\rightarrow q)$}};
\node [world, right=of !, xshift=24pt, yshift=-22] (u) {\scalebox{0.5}[0.5]{$\mathcal{A}'$}};

True state:
\node [world, right=of npq, xshift=10pt] (!) {$\s \underline{p,q}$};

\node [world, right=of !](npq) {$\s \bar{p},q$};
\node [world, below=of !](pnq) {$\s p,\bar{q}$};
\node [world, right=of pnq] (npnq) {$\s \bar{p},\bar{q}$};

Attention span
\node [world, below=of npnq, yshift=20pt, xshift=-19] (A) {\footnotesize$\s A_i=0$};

Relation
\draw  (npnq) -- (npq); 

\end{tikzpicture}

\end{center}\caption{\label{fig:Single Agent example} Example for a single agent $i$.
\emph{Left}: attention models $\mathcal{A}$ and $\mathcal{A}'$.
Events are outlined and labeled with their preconditions. Actual events
are thickly outlined. Dotted lines show $Q_{i}^{*}$, with reflexive
links omitted ($Q_{i}$ has only reflexive links). Costs functions
$c_{i},c_{i}'$ are assumed constant across events. Assume $\forall\varphi,\psi\in\mathcal{L},\forall e\in\mathcal{A},\mathcal{A}',c_{i}(\varphi+\psi,e)=c_{i}(\varphi,e)+c_{i}(\psi,e)$,
and the same for $c'$. \emph{Top Right }and \emph{Bottom Right }:
attention state $(M,w)$ and updates. Worlds are labeled with the
literals they satisfy, the actual world is underlined. Full lines
show $R_{i}$. The attention resource $A_{i}$ is constant across
worlds. Arrows show updates and are labeled with the action model
and question used. \textbf{Story: }Agent $i$ is uncertain of the
truth value of theorems $p$ and $q$, but has the goal to learn them.
She can either do proofs ($\mathcal{A}$), or look up known implications
($\mathcal{A}'$). Given her attention, can she succeed? \emph{Top}
\emph{Right}: She attempts to prove $(p\wedge q)$, but drains her
attention and fails to learn anything. \emph{Bottom} \emph{Right}:
She successfully proves $p$, then successfully looks up $p\rightarrow q$.
By this less attention-taxing strategy, she also learns $q$ and so
achieves her goal.}
\vspace{-14pt}
\end{figure}
\subsection{\label{sec:Attention Actions}Attention Actions}

To perform actions on attention states, we use an augmented version
of action models (without postconditions), well-known from DEL \cite{BaltagBMS_1998},
plus a map capturing what question (formula) each agent pays attention
to. 

\begin{definition}
\label{def: attention action model and action model} An \emph{attention
action}\textbf{\emph{ }}\emph{model} is a tuple $\mathcal{A}=(E,Q,Q^{*},pre,c)$
where $E\neq\emptyset$ is a finite set of events; $Q:I\rightarrow\mathcal{P}(E^{2})$
and $Q^{*}:I\rightarrow\mathcal{P}(E^{2})$ assign each $i\in I$
equivalence relations $Q_{i}$ and $Q_{i}^{*}$; $\mbox{\ensuremath{pre:E\rightarrow\mathcal{L}}}$
assigns a precondition to each event; $c:I\times\mathcal{L}\times E\rightarrow\mathbb{N}$
is a \emph{cost function} satisfying that $c_{i}(\top,e)=0$ and that
$(e,e')\mbox{\ensuremath{\in Q_{i}\cup Q_{i}^{*}}}$ implies $c_{i}(\varphi,e)=c_{i}(\varphi,e')$,
for all $i\in I$, $\varphi\in\mathcal{L}$.

A \emph{question map} $\text{€}:I\rightarrow\mathcal{L}$ assigns
each agent a formula.

For $e\in E$, a triple $(\mathcal{A},\text{€},e)$ is an \emph{attention
action}, with $e$ the \emph{actual event}. Let \textsc{all} be the
class of attention actions.
\end{definition}
The interpretation of attention actions is largely on par with the
interpretation of epistemic actions from standard DEL. Preconditions
state the conditions under which events can occur, and the $Q_{i}$
relation represents \emph{unavoidable} indistinguishability: If $(e,e')\in Q_{i}$,
then $i$ simply cannot distinguish $e$ and $e'$. However, the interpretation
of $Q^{*}$ is novel: If $(e,e')\in Q_{i}^{*}$, then the events are
by \emph{default }indistinguishable for $i$, but may be distinguishable
if $i$ pays attention to the right question (and has the attention
to do so). Through the product defined below, $Q_{i}^{*}$ interacts
with $i$'s question $\text{€}_{i}$, its cost, and $i$'s attention
span: 
{} If $i$ asks a question which $e$ and $e'$ answer differently (i.e.
$pre(e)\vDash\text{€}_{i}$ and $pre(e')\not\vDash\text{€}_{i}$)
and for which $i$ has sufficient attention ($A_{i}(w)\geq c_{i}(\text{€}_{i},e)$),
then $e$ and $e'$ will be distinguishable for $i$ in world $w$.
This reduces $i$'s attention in the updated model. Hence agents may
learn the answers to their questions by paying attention.

About costs, we only assume that agents know the cost of each formula,
and that asking no question ($\top$) is free. One could also require
e.g. that $c_{i}(K_{j}\varphi,e)\geq c_{i}(\varphi,e)$ or $c_{i}(\varphi\wedge\psi,e)\ge c_{i}(\varphi,e)+c_{i}(\psi,e)$.
Such assumptions would not influence the paper's results. About questions,
we make the strong assumption that what is payed attention to is common
knowledge. In Sec.\ref{sec:Final-Remarks}, we remark on lifting
this.
\begin{definition}
\label{def: Attention Update} Let $(M,w)=((W,R,V,A),w)$ be an attention
state and let $X=(\mathcal{A},\text{€},e)$ be an attention action
with $\mathcal{A}=(E,Q,Q^{*},pre,c)$. For all $i\in I$ and $\varphi\in\mathcal{L}$,
let 
\[
Q_{i}^{*}[\varphi|\neg\varphi]=Q_{i}^{*}\setminus\{(e,e')\in Q_{i}^{*}\colon pre(e)\vDash\varphi\text{ and }pre(e')\not\vDash\varphi\}.
\]
 The attention update of $(M,w)$ with $X$ is $(M,w)\otimes X=((W^{X},R^{X},V^{X},A^{X}),(w,e))$
where
\begin{lyxlist}{00.00.0000}
\item [{\textup{$W^{X}=\{(w,e)\in W\times E\colon M,w\vDash pre(e)\}$,}}]~
\item [{$R^{X}$~is~given~by~$((w,e),(v,f))\in R_{i}^{X}$~iff~$(w,v)\in R_{i}$~and}]~
\item [{~$(e,f)\in\begin{cases}
Q_{i}\cup Q_{i}^{*} & \text{if }c_{i}(\lyxmathsym{€}_{i},e)>A_{i}(w)\\
Q_{i}\cup Q_{i}^{*}\mbox{[\ensuremath{\lyxmathsym{€}_{i}\mid\neg\lyxmathsym{€}_{i}}]} & \text{if }c_{i}(\lyxmathsym{€}_{i},e)\le A_{i}(w)\not=0
\end{cases}$}]~
\item [{\textup{$V^{X}(p)=\{(w,e)\in W^{X}\colon w\in V(p)\}$}}] \textup{for
all $p\in\Phi$,}
\item [{$A_{i}^{X}(w,e)=\max\{0,A{}_{i}(w)-c_{i}(\lyxmathsym{€}_{i},e)\}$,}]~
\end{lyxlist}
Call~$X$~\emph{applicable}~to~$(M,w)$~if~$M,w\vDash pre(e)$,~else~not.
Where $\sigma$ is a (potentially infinite) sequence of attention
actions, if for all $k\leq n$, $X_{k+1}$ is applicable to $(M,w)\otimes X_{1}\otimes\cdots\otimes X_{k}$,
denote $(M,w)\otimes X_{1}\otimes\cdots\otimes X_{n}$ by $(M,w)^{\sigma}=((W^{\sigma},R^{\sigma},V^{\sigma},A^{\sigma}),w^{\sigma})$
and call $\sigma$ \emph{applicable} to $(M,w)$.
\end{definition}

\vspace{-7pt}
\subsection{No Free Lunch}

The attention actions of Figures~\ref{fig:Single Agent example}
and~\ref{fig:MuddyChildren} belong to a class shown special interest
in this paper: the class of \emph{No Free Lunch }(\textsc{nfl}) actions.
\begin{definition}
Let \textsc{nfl} be the class of attention actions $(\mathcal{A},\text{€},e)=((E,Q,Q^{*},pre),\lyxmathsym{€},e)$
that satisfy, for all $i\in I$, $Q_{i}^{*}=E\times E$ and for all
$e\in E$, $\varphi\in\mathcal{L}\backslash\{\top\}$, $c_{i}(\varphi,e)>0$.
\end{definition}

We find \textsc{nfl} actions to be of special interest as they enforce
attention use for all non-trivial questions, and thus respect that
attention is a bounded resource for learning. Jointly, the two restrictions
entail that any change in information comes at some cost to attention,
with the exception that agents always learn an \textsc{nfl} action's
unavoidable ``background information'' $\bigvee_{e\in E}pre(e)$.
Special cases where $\top\vDash\bigvee_{e\in E}pre(e)$ then ensure
no learning without attention cost, as in $\mathcal{A},\mathcal{A}'$
of Figure~\ref{fig:Single Agent example} and$\mathcal{A},\mathcal{A}''$
of Figure~\ref{fig:MuddyChildren}. Stated differently, then an \textsc{nfl}
action applied without attention cost is equivalent to the \emph{public
announcement} of its background information. This is the content of
Lemma~\ref{lem:eq.to.pub.ann} below, used later to show decidability.
\begin{definition}
\label{def:background.announcement}For \textup{\negthinspace{}}any\textup{\negthinspace{}}
$X\!\!=\!\!((E,\!Q,\!Q^{*},pre,c),\!\text{€},\!e)$,~its \emph{background
announcement} \textup{is $X!\!\!=\!\!((E^{!},\!Q^{!},\!Q^{*!},\!pre^{!},\!c^{!}),\!\text{€}^{!},\!e^{!})$
for $E^{!}\!=\!\{e^{!}\}$,\negthinspace{} $Q^{!}{}_{i}\!=\!Q_{i}^{*!}\!=\!\{(e^{!},e^{!})\}$,
$pre^{!}(e^{!})\!=\!\bigvee_{e\in E}pre(e)$, $c^{!}=c$ and $\lyxmathsym{€}_{i}^{!}=\top$,
for all $i\in I$.}
\end{definition}

\begin{lemma}
\label{lem:eq.to.pub.ann}For\textup{\negthinspace{}} any\textup{\negthinspace{}}
$X\!\in\!\textsc{nfl}$, for\textup{\negthinspace{}} any attention
state $(M,w)\!=\!((W,R,V,A),w)$, if $A^{X}\!=\!A$, then $(M,w)^{X}\!\leftrightarroweq\!(M,w)^{X!}$.
\end{lemma}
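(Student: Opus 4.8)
The plan is to exhibit an explicit attention bisimulation (Definition~\ref{def:attention bisimulation}) between the two pointed products. Throughout, write $q_i$ for the formula that $X$'s question map assigns to agent $i$, and read the hypothesis $A^X=A$ as $A_i^X(w,e)=A_i(w)$ for every agent $i$ and every $(w,e)\in W^{X}$. The natural candidate is
\[
Z=\{((w,e),(w,e^{!}))\colon (w,e)\in W^{X}\},
\]
relating each surviving world $(w,e)$ of the \textsc{nfl}-update to the unique world $(w,e^{!})$ of the background-announcement update with the same first coordinate. The two updates survive on the same first coordinates, since $(w,e)\in W^{X}$ for some $e$ iff $M,w\vDash\bigvee_{e\in E}pre(e)$ iff $(w,e^{!})\in W^{X!}$, using $pre^{!}(e^{!})=\bigvee_{e\in E}pre(e)$. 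As $X$ is applicable, the designated world $(w,e)$ lies in $W^{X}$, hence $(w,e^{!})\in W^{X!}$, and $Z$ relates the two actual worlds.

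The crux, and the step I expect to be the main obstacle, is to show that under the hypotheses neither update refines the underlying epistemic relations: for every agent $i$,
\[
R_i^{X}=\{((w,e),(v,f))\in W^{X}\times W^{X}\colon (w,v)\in R_i\},
\]
and likewise $R_i^{X!}$ relates $(w,e^{!})$ and $(v,e^{!})$ exactly when $(w,v)\in R_i$. For $X!$ this is immediate, since the trivial question $\top$ has zero cost and $(e^{!},e^{!})\in Q_i^{!}\cap Q_i^{*!}$, so the event-component of the relation clause always holds. For $X$ the argument is the heart of the lemma and uses both \textsc{nfl} conditions together with $A^X=A$. Because $X\in\textsc{nfl}$, $Q_i^{*}=E\times E$, so $Q_i\cup Q_i^{*}=E\times E$; it thus suffices to rule out the refining case $Q_i\cup Q_i^{*}[q_i\mid\neg q_i]$ ever dropping a pair. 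If $q_i=\top$, then no event's precondition fails to entail $q_i$, so $Q_i^{*}[\top\mid\neg\top]=Q_i^{*}=E\times E$. If $q_i\neq\top$, then $c_i(q_i,e)>0$ by the \textsc{nfl} cost condition, and $A^X=A$, i.e.\ $\max\{0,A_i(w)-c_i(q_i,e)\}=A_i(w)$, forces $A_i(w)=0$; we are then in the first branch $c_i(q_i,e)>A_i(w)$, which again gives $Q_i\cup Q_i^{*}=E\times E$. (The boundary case $q_i=\top$ with $A_i(w)=0$ is harmless: both branches deliver the total event relation.) In every case the event-pair condition is met, establishing the displayed identity.

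With these relational identities in hand, checking that $Z$ is an attention bisimulation is routine. For Atoms, $V^{X}(p)$ and $V^{X!}(p)$ both track membership of the first coordinate in $V(p)$ and so agree across $Z$, while $A_i^{X}(w,e)=A_i(w)=A_i^{X!}(w,e^{!})$, using $A^X=A$ on the left and $c_i^{!}(\top,e^{!})=0$ on the right. For Forth, a pair $(w,e)\,R_i^{X}\,(v,f)$ reduces to $(w,v)\in R_i$, whence $(w,e^{!})\,R_i^{X!}\,(v,e^{!})$ with $(v,f)\,Z\,(v,e^{!})$. For Back, $(w,e^{!})\,R_i^{X!}\,(v,e^{!})$ gives $(w,v)\in R_i$ and $M,v\vDash\bigvee_{e}pre(e)$, so some $f$ satisfies $(v,f)\in W^{X}$ with $(w,e)\,R_i^{X}\,(v,f)$ and $(v,f)\,Z\,(v,e^{!})$; the possible multiplicity of events over a single surviving world is precisely what $Z$ collapses. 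Hence $Z$ witnesses $(M,w)^{X}\leftrightarroweq(M,w)^{X!}$.
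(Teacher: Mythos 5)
Your proposal is correct and follows essentially the same route as the paper's proof: the same relation $Z$ pairing each $(w,e)$ with $(w,e^{!})$, the same use of $Q_i^{*}=E\times E$ together with positive costs and $A^{X}=A$ to show the event-component of $R_i^{X}$ never refines (the paper phrases this as ``either $\text{\textbf{?}}_i=\top$ or $A_i(w)=0<c_i$'' inside the Back clause, whereas you isolate it as a relational identity up front), and the same appeal to $pre^{!}(e^{!})=\bigvee_{e\in E}pre(e)$ in the Back direction. Your explicit handling of the boundary case $c_i=A_i(w)=0$ and the restriction of $Z$ to $W^{X}$ are minor tidyings of the same argument.
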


\begin{proof}
Let $X=((E,Q,Q^{*},pre,c),\text{€},e)$. Then $Z=\{((w,e),(w,e^{!})):w\in W,e\in E\}\subseteq(M,w)^{X}\times(M,w)^{X!}$
is a bisimulation: \textbf{Atoms:} for all $p\in\Phi$, $w\in W$,
$e\in E$, $i\in I$, $(w,e)\in V^{X}(p)$ iff $(w,e^{!})\in V^{X!}(p)$
and $A_{i}^{X}(w,e)=A_{i}^{X!}(w,e^{!})$ as both $A^{X}=A$ and $A^{X!}=A$,
the latter as $\lyxmathsym{€}_{i}^{!}=\top$ for all $i\in I$. \textbf{Forth:}
Assume $(w,e)Z(w,e^{!})$ and $(w,e)R_{i}^{X}(v,f)$ for some $i\in I$.
Then $wR_{i}v$ and as $Q_{j}^{!}=Q_{j}^{*!}=\{(e^{!},e^{!})\}$ for
all $j\in I$, so $(w,e^{!})R{}_{i}^{X!}(v,e^{!})$. Hence $(v,f)Z(v,e^{!})$.
\textbf{Back}: Assume $(w,e)Z(w,e^{!})$ and $(w,e^{!})R_{i}^{X!}(v,e^{!})$
for some $i\in I$. Then $wR_{i}v$. As $(v,e^{!})\in(M,w)^{X!}$,
$M,v\vDash pre^{!}(e^{!})$, i.e. $M,v\vDash\bigvee_{e\in E}pre(e)$
(by Defs.~\ref{def:background.announcement} and \ref{def: Attention Update}).
Hence, for some $f\in E$, $M,v\vDash pre(f)$. So $(v,f)\in(M,w)^{X}$.
As $X\in\text{\textsc{nfl}}$, $eQ^{*}f$. Hence, also $(w,e)R_{i}^{X}(v,f)$
(cf. Def.~\ref{def: Attention Update}) as $A^{X}=A$ implies that
either $\text{€}_{i}=\top$ or $A_{i}(w)=0<c_{i}(\lyxmathsym{€}_{i},e)$.
Finally, $(v,f)Z(v,e^{!})$. Hence, $(M,w)^{X}\leftrightarroweq(M,w)^{X!}$.
\end{proof}
\vspace{-1pt}
The requirement $A^{X}=A$ of Lemma~\ref{lem:eq.to.pub.ann} states that
no agent spends attention. We remark that if agent $i$ spends attention
and learns, then, even if $j$ does not spend attention, $j$'s higher-order
will still change. This is as in standard DEL where any informational
change one way or the other affects all agents.
\begin{figure}[H]
\begin{tikzpicture}\tikzset{event/.style={rectangle, draw=black, thin, rounded corners, text centered, node distance=16pt, minimum height=12pt}, world/.style={thick, node distance=16pt}}



\node[event, accepting, yshift=-10pt, xshift=-50pt] (A) {$\s  \varphi_C $ };

\node [below=of A, yshift=28pt] (c) {\footnotesize$\s\forall \varphi\in \mathcal{L}\setminus \{\top\}, c_i(\varphi,e)=1$};

\node [world, left=of A,  xshift=12pt ]  (name) {$\s \mathcal{A}:$};


\node[event, accepting, below=of A, yshift=-39] (B) {$\s  \varphi_C $ };
\node[event, above=of B, yshift=-8pt] (B') {$\s \overline{\varphi_C}$ };

\path (B) edge (B')[-, semithick, densely dotted] node[right, xshift=-1pt, yshift=10pt] {\scalebox{0.5}[0.5]{$a,b,c$}}   (B');

\node [world, left=of B,  xshift=17pt ]  (name) {$\s \mathcal{A'}:$};

\node [below=of B , yshift=28pt] (c) {\footnotesize$\s \forall \varphi\in\mathcal{L}\setminus \{\top\}, \forall e\in\mathcal{A'},$};
\node [below=of B , yshift=20pt] (c) {\footnotesize$\s  c_i(\varphi,e)=1$};


\node[event, accepting, below=of B, yshift=-40] (B) {$\s  \varphi_C $ };
\node[event, above=of B, yshift=-8pt] (B') {$\s \overline{\varphi_C}$ };

\path (B) edge [-, bend right,  semithick] node [xshift=3]{\scalebox{0.5}[0.5]{$c$}} (B');

\path (B) edge [-, bend left,  semithick, densely dotted] node [xshift=-6]{\scalebox{0.5}[0.5]{$a,b$} } (B');

\node [world, left=of B,  xshift=17pt ]  (name) {$\s \mathcal{A''}:$};

\node [below=of B , yshift=28pt] (c) {\footnotesize$\s \forall \varphi\in \mathcal{L}\setminus \{\top\}, \forall e\in\mathcal{A''},$};
\node [below=of B , yshift=20pt] (c) {\footnotesize$\s  c_i(\varphi,e)=1$};

%

\node [world] (!) {$\s \underline{\dA{1}\dB{2}\dC{2}}$};


\node [world, above=of !, xshift=20pt, yshift=-10pt](ddc) {$\s \dA{1}\dB{2}\cC{2}$};
\node [world, right=of !](cdd) {$\s \cA{1}\dB{2}\dC{2}$};
\node [world, below=of !](dcd) {$\s \dA{1}\cB{2}\dC{2}$};

\node [world, below=of ddc] (dcc) {$\s \dA{1}\cB{2}\cC{2}$};
\node [world, right=of ddc] (cdc) {$\s \cA{1}\dB{2}\cC{2}$};
\node [world, right=of dcd] (ccd) {$\s \cA{1}\cB{2}\dC{2}$};

\draw [color=cyan!50!green!85!white] (dcd) -- (ccd); 
\draw [color=cyan!50!green!85!white] (ddc) -- (cdc); 

\draw [color=violet!85!white] (ddc) -- (dcc); 
\draw [color=violet!85!white] (cdd) -- (ccd); 

\draw [color=orange] (cdd) -- (cdc); 
\draw [color=orange] (dcd) -- (dcc); 

\draw [color=cyan!50!green!85!white] (!) -- (cdd); 
\draw [color=violet!85!white] (!) -- (dcd); 
\draw [color=orange] (!) -- (ddc); 


\node [world, right=of !, xshift=26pt] (u) {$\Rightarrow$};
\node [world, above=of u, xshift=-1pt, yshift=-21] (t) {\scalebox{0.5}[0.5]{$\top$}};
\node [world, below=of u, xshift=-2pt, yshift=22] (v) {\scalebox{0.5}[0.5]{$\mathcal{A}$}};

\node [world, right=of u, xshift=-15pt] (!) {$\s \underline{\dA{1}\dB{2}\dC{2}}$};

\node [world, above=of !, xshift=20pt, yshift=-10pt](ddc) {$\s \dA{1}\dB{2}\cC{2}$};
\node [world, right=of !](cdd) {$\s \cA{1}\dB{2}\dC{2}$};
\node [world, below=of !](dcd) {$\s \dA{1}\cB{2}\dC{2}$};

\draw [color=cyan!50!green!85!white] (!) -- (cdd); 
\draw [color=violet!85!white] (!) -- (dcd); 
\draw [color=orange] (!) -- (ddc); 


\node [world, right=of !, xshift=26pt] (u) {$\Rightarrow$};
\node [world, above=of u, xshift=-1pt, yshift=-21] (t) {\scalebox{0.5}[0.5]{$\top$}};
\node [world, below=of u, xshift=-2pt, yshift=22] (v) {\scalebox{0.5}[0.5]{$\mathcal{A}$}};

\node [world, right=of u, xshift=-15pt] (!) {$\s \underline{\dA{1}\dB{2}\dC{2}}$};


\node [world, yshift=-38, xshift=-15pt](anchor1) {};
\node [world, yshift=-38, xshift=225pt](anchor2) {};
\draw (anchor1) -- (anchor2);


\node [world, yshift=-68pt](!) {$\s \underline{\dA{1}\dB{2}\dC{2}}$};


\node [world, above=of !, xshift=20pt, yshift=-10pt](ddc) {$\s \dA{1}\dB{2}\cC{2}$};
\node [world, right=of !](cdd) {$\s \cA{1}\dB{2}\dC{2}$};
\node [world, below=of !](dcd) {$\s \dA{1}\cB{2}\dC{2}$};

\node [world, below=of ddc] (dcc) {$\s \dA{1}\cB{2}\cC{2}$};
\node [world, right=of ddc] (cdc) {$\s \cA{1}\dB{2}\cC{2}$};
\node [world, right=of dcd] (ccd) {$\s \cA{1}\cB{2}\dC{2}$};

\draw [color=cyan!50!green!85!white] (dcd) -- (ccd); 
\draw [color=cyan!50!green!85!white] (ddc) -- (cdc); 

\draw [color=violet!85!white] (ddc) -- (dcc); 
\draw [color=violet!85!white] (cdd) -- (ccd); 

\draw [color=orange] (cdd) -- (cdc); 
\draw [color=orange] (dcd) -- (dcc); 

\draw [color=cyan!50!green!85!white] (!) -- (cdd); 
\draw [color=violet!85!white] (!) -- (dcd); 
\draw [color=orange] (!) -- (ddc); 

\node [world, right=of !, xshift=26pt] (u) {$\Rightarrow$};
\node [world, above=of u, xshift=-1pt, yshift=-21] (t) {\scalebox{0.5}[0.5]{$\varphi_C$}};
\node [world, below=of u, xshift=-2pt, yshift=22] (v) {\scalebox{0.5}[0.5]{$\mathcal{A'}$}};

\node [world, right=of u, xshift=-15pt] (!) {$\s \underline{\dA{0}\dB{1}\dC{1}}$};

\node [world, above=of !, xshift=20pt, yshift=-10pt](ddc) {$\s \dA{0}\dB{1}\cC{1}$};
\node [world, right=of !](cdd) {$\s \cA{0}\dB{1}\dC{1}$};
\node [world, below=of !](dcd) {$\s \dA{0}\cB{1}\dC{1}$};

\node [world, below=of ddc] (dcc) {$\s \dA{0}\cB{1}\cC{1}$};
\node [world, right=of ddc] (cdc) {$\s \cA{0}\dB{1}\cC{1}$};
\node [world, right=of dcd] (ccd) {$\s \cA{0}\cB{1}\dC{1}$};

\draw [color=cyan!50!green!85!white] (!) -- (cdd); 
\draw [color=violet!85!white] (!) -- (dcd); 
\draw [color=orange] (!) -- (ddc); 

\node [world, right=of !, xshift=26pt] (u) {$\Rightarrow$};
\node [world, above=of u, xshift=-1pt, yshift=-21] (t) {\scalebox{0.5}[0.5]{$\varphi_C$}};
\node [world, below=of u, xshift=-2pt, yshift=22] (v) {\scalebox{0.5}[0.5]{$\mathcal{A'}$}};

\node [world, right=of u, xshift=-15pt] (!) {$\s \underline{\dA{0}\dB{0}\dC{0}}$};

\node [world, above=of !, xshift=20pt, yshift=-10pt](ddc) {$\s \dA{0}\dB{0}\cC{0}$};
\node [world, right=of !](cdd) {$\s \cA{0}\dB{0}\dC{0}$};
\node [world, below=of !](dcd) {$\s \dA{0}\cB{0}\dC{0}$};

\node [world, below=of ddc] (dcc) {$\s \dA{0}\cB{0}\cC{0}$};
\node [world, right=of ddc] (cdc) {$\s  \cA{0}\dB{0}\cC{0}$};
\node [world, right=of dcd] (ccd) {$\s \cA{0}\cB{0}\dC{0}$};

\draw [color=cyan!50!green!85!white] (!) -- (cdd); 


\node [world, yshift=-108, xshift=-16pt](anchor1) {};
\node [world, yshift=-108, xshift=225pt](anchor2) {};
\draw (anchor1) -- (anchor2);


\node [world, yshift=-138pt](!) {$\s \underline{\dA{0}\dB{2}\dC{2}}$};


\node [world, above=of !, xshift=20pt, yshift=-10pt](ddc) {$\s \dA{0}\dB{2}\cC{2}$};
\node [world, right=of !](cdd) {$\s \cA{0}\dB{2}\dC{2}$};
\node [world, below=of !](dcd) {$\s \dA{0}\cB{2}\dC{2}$};

\node [world, below=of ddc] (dcc) {$\s \dA{0}\cB{2}\cC{2}$};
\node [world, right=of ddc] (cdc) {$\s \cA{0}\dB{2}\cC{2}$};
\node [world, right=of dcd] (ccd) {$\s \cA{0}\cB{2}\dC{2}$};

\draw [color=cyan!50!green!85!white] (dcd) -- (ccd); 
\draw [color=cyan!50!green!85!white] (ddc) -- (cdc); 

\draw [color=violet!85!white] (ddc) -- (dcc); 
\draw [color=violet!85!white] (cdd) -- (ccd); 

\draw [color=orange] (cdd) -- (cdc); 
\draw [color=orange] (dcd) -- (dcc); 

\draw [color=cyan!50!green!85!white] (!) -- (cdd); 
\draw [color=violet!85!white] (!) -- (dcd); 
\draw [color=orange] (!) -- (ddc); 

\node [world, right=of !, xshift=26pt] (u) {$\Rightarrow$};
\node [world, above=of u, xshift=-1pt, yshift=-21] (t) {\scalebox{0.5}[0.5]{$\varphi_C$}};
\node [world, below=of u, xshift=-2pt, yshift=22] (v) {\scalebox{0.5}[0.5]{$\mathcal{A''}$}};

\node [world, right=of u, xshift=-15pt] (!) {$\s \underline{\dA{0}\dB{1}\dC{1}}$};

\node [world, above=of !, xshift=20pt, yshift=-10pt](ddc) {$\s \dA{0}\dB{1}\cC{1}$};
\node [world, right=of !](cdd) {$\s \cA{0}\dB{1}\dC{1}$};
\node [world, below=of !](dcd) {$\s \dA{0}\cB{1}\dC{1}$};

\node [world, below=of ddc] (dcc) {$\s \dA{0}\cB{1}\cC{1}$};
\node [world, right=of ddc] (cdc) {$\s \cA{0}\dB{1}\cC{1}$};
\node [world, right=of dcd] (ccd) {$\s \cA{0}\cB{1}\dC{1}$};

\draw [color=cyan!50!green!85!white] (dcd) -- (ccd); 
\draw [color=cyan!50!green!85!white] (ddc) -- (cdc); 
\draw [color=orange] (cdd) -- (cdc); 
\draw [color=orange] (dcd) -- (dcc); 

\draw [color=cyan!50!green!85!white] (!) -- (cdd); 
\draw [color=violet!85!white] (!) -- (dcd); 
\draw [color=orange] (!) -- (ddc); 

\node [world, right=of !, xshift=26pt] (u) {$\Rightarrow$};
\node [world, above=of u, xshift=-1pt, yshift=-21] (t) {\scalebox{0.5}[0.5]{$\varphi_C$}};
\node [world, below=of u, xshift=-2pt, yshift=22] (v) {\scalebox{0.5}[0.5]{$\mathcal{A''}$}};

\node [world, right=of u, xshift=-15pt] (!) {$\s \underline{\dA{0}\dB{0}\dC{0}}$};

\node [world, above=of !, xshift=20pt, yshift=-10pt](ddc) {$\s \dA{0}\dB{0}\cC{0}$};
\node [world, right=of !](cdd) {$\s \cA{0}\dB{0}\dC{0}$};
\node [world, below=of !](dcd) {$\s \dA{0}\cB{0}\dC{0}$};

\node [world, below=of ddc] (dcc) {$\s \dA{0}\cB{0}\cC{0}$};
\node [world, right=of ddc] (cdc) {$\s \cA{0}\dB{0}\cC{0}$};
\node [world, right=of dcd] (ccd) {$\s \cA{0}\cB{0}\dC{0}$};

\draw [color=cyan!50!green!85!white] (dcd) -- (ccd); 
\draw [color=cyan!50!green!85!white] (ddc) -- (cdc); 
\draw [color=orange] (cdd) -- (cdc); 
\draw [color=orange] (dcd) -- (dcc); 

\draw [color=cyan!50!green!85!white] (!) -- (cdd); 
\draw [color=orange] (!) -- (ddc); 
\draw [color=violet!85!white] (!) -- (dcd); 
\end{tikzpicture}
\caption{\label{fig:MuddyChildren}
Three variants of Muddy Children for $I=\{a,b,c\}$.Let $\varphi_{C}$ be short for $\bigwedge_{i\in I}\neg K_{i}d\wedge\neg K_{i}\neg d$:
No child knows whether they are dirty ($d$). \emph{On the left column}:
Three attention action models. $\mathcal{A}$ is a public announcement
of $\varphi_{C}$, the update used in the standard modeling of Muddy
Children. In $\mathcal{A}'$, the children may fail to pay attention
to the announcement. $\mathcal{A}''$ is as $\mathcal{A}'$, except
that $c$ cannot hear the announcement (the full line is the $Q$-relation).
\emph{Rows 1, 2, and 3} show different dynamics, each starting \emph{after}
the parent's announcement that at least one child is dirty. Each world
$w$ is labeled $A_{a}(w)A_{b}(w)A_{c}(w)$, with $A_{i}(w)$ dotted
if $i$ is dirty in $w$. Horizontal lines show $R_{a}$, vertical
lines $R_{b}$, and diagonal lines $R_{c}$. The arrows show updates
and are labeled with the action model and the question used by all
agents. \emph{Row 1}: Standard Muddy Children (see e.g. \cite{Fagin_etal_1995}):
The children pay only attention to $\top$, but are forced to learn.
\emph{Row }2: Muddy Children with limited attention: The children
all pay attention to $\varphi_{C}$. After the first announcement,
$a$ runs out of attention, so does not learn its own state by the
second announcement. \emph{Row }3: As Row 2, but $a$ starts with
zero attention and $c$ cannot hear the announcement, despite paying
attention. Nobody learns their state.}
\end{figure}
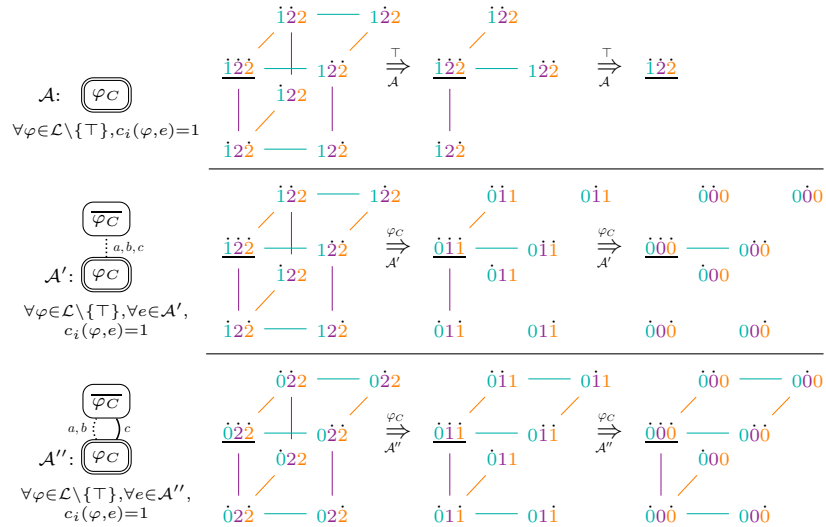
\section{\label{sec:Emulation-Results}Emulation Results}

This section shows that attention states may be recast as DEL epistemic
states and \emph{vice versa}, that every DEL epistemic action \emph{without}
postconditions may be emulated by an attention action but not \emph{vice
versa}, and that every attention action may be emulated by an epistemic
action \emph{with} postconditions, but not \emph{vice versa}. These
results facilitate proving the above modal equivalence result (Prop.
\ref{prop:Hennessy-Milner}) and undecidability of general attention
planning (Thm.~\ref{thm:All-is-undecidable}) as corollaries to existing
results. They further show the relationship with DEL and allow us
to place our results on epistemic planning with attention in the wider
context of DEL-based epistemic planning in Sec.~\ref{sec:Final-Remarks}.

\subsection{Attention States and Epistemic States}
\begin{definition}
\label{def:K-transform}\label{def: semantics for epistemic states}An
\emph{epistemic Kripke model}\textbf{\emph{ }}for $I$ and $At$ is
a tuple $K=(W,R,V)$ where $W\neq\emptyset$ is a finite set of worlds,
$R:I\rightarrow\mathcal{P}(W^{2})$ assigns each $i\in I$ an equivalence
relation $R_{i}$; $V:At\rightarrow\mathcal{P}(W)$ is a valuation.
For $w\in W$, $(K,w)$ is called an epistemic state.

Truth of \textup{$\mathcal{L}$}-formulas over $(K,w)$ is as in
Def.\,\ref{def: semantics attention language}, except for the atomic
clause:
\begin{lyxlist}{00.00.0000}
\item [{$K,w\vDash p$}] iff $w\in V(p)$ for all $p\in\Phi\cup\Psi$.
\end{lyxlist}
The \emph{Kripke rendition }of the attention state $(M,w)=((W,R,V,A),w)$
is $K(M,w)=((W,R,V_{A}),w)$ where for all $p\in\Phi\cup\Psi$, \mbox{$V_{A}(p)=\{w\in W:M,w\vDash p\}$}.
\end{definition}

Our first emulation result shows that the class of attention states
can be embedded in the class of epistemic states:
\begin{proposition}
\label{prop:K(M)-M-Equivalence }For any attention state $(M,w)$,
$K(M,w)$ is $i)$ an epistemic state; $ii)$ isomorphic to $(M,w)$;
$iii)$ for all $\varphi\in\mathcal{L}$, $K(M,w)\vDash\varphi$ iff
$(M,w)\vDash\varphi$. 
\end{proposition}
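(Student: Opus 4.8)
The plan is to treat the three claims in order, with $ii)$ doing the real work while $i)$ and $iii)$ follow almost immediately. Throughout, write $(M,w) = ((W,R,V,A),w)$ and $K(M,w) = ((W,R,V_A),w)$, and recall that the two models share the same carrier $W$ and the same relational structure $R$; the only genuine difference is how atoms---particularly the attention atoms in $\Psi$---are evaluated.

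For $i)$, I would simply verify each clause of Definition~\ref{def:K-transform}. The set $W$ is nonempty and finite and each $R_i$ is an equivalence relation, both inherited from $M$ (Def.~\ref{def: attention model}); and $V_A$, defined by $V_A(p) = \{v \in W : M,v \vDash p\}$, is a total function $At \to \mathcal{P}(W)$, hence a valuation. Since the notion of epistemic Kripke model places no consistency constraints on the valuation of attention atoms, no further verification is needed, and $w \in W$ then makes $K(M,w)$ an epistemic state.

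For $ii)$, I would exhibit the identity map on $W$ as an isomorphism, which reduces to two checks: that it preserves each $R_i$ (immediate, as the relations are literally identical) and that it preserves the induced atomic valuations, i.e. for every $p \in At$ and $v \in W$, $M,v \vDash p$ iff $v \in V_A(p)$. The latter is exactly the defining property of $V_A$, but it is worth unfolding by cases to expose where the two semantics are reconciled: for $p \in \Phi$ both sides amount to $v \in V(p)$, whereas for $p \in \Psi$ (i.e. $p = (\alpha_i = n)$ or $p = (\alpha_i < n)$) the attention-state clause (Def.~\ref{def: semantics attention language}) reads the truth value off $A_i(v)$ while the epistemic-state clause reads it off $V_A(p)$---and $V_A$ is constructed precisely so that these agree.

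For $iii)$, modal satisfaction is preserved by the isomorphism of $ii)$, so the equivalence follows by a routine induction on $\varphi \in \mathcal{L}$: the atomic base case is the content of $ii)$, the Boolean cases are immediate, and the $K_i$ case uses that $R_i$ is shared between the two models. The only point requiring care throughout is the bookkeeping distinction between the two atomic clauses---attention atoms are evaluated via $A$ in $(M,w)$ but via the valuation in $K(M,w)$---so the substance of the proposition is entirely that $V_A$ internalizes the resource function $A$ into a standard valuation. Once this is observed, I expect no remaining obstacle.
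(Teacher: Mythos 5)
Your proposal is correct and follows essentially the same route as the paper's own proof: verify the clauses of Definition~\ref{def:K-transform} for $i)$, exhibit the identity map on $W$ as the isomorphism for $ii)$ (with the atomic agreement being exactly the defining property of $V_A$), and derive $iii)$ from $ii)$. Your explicit unfolding of the two atomic clauses and the induction for $iii)$ merely spells out what the paper leaves as ``immediate.''
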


\begin{proof}
Immediate. See Proof Appendix for details.
\end{proof}

Prop.~\ref{prop:K(M)-M-Equivalence } allows us to establish Prop.~\ref{prop:Hennessy-Milner}
as a corollary. For standard definitions of a bisimulation for Kripke
models and the corresponding Hennessy-Milner Theorem, see \cite[Def. 2.16 and Thm. 2.24]{BlueModalLogic2001}.
\begin{proof}
[Proof of Prop. \ref{prop:Hennessy-Milner}] Assume that $(M,w)$
and $(M',w')$ are bisimilar witnessed by $Z$. Then $Z$ is also
a (Kripke model) bisimulation between $K(M,w)$ and $K(M',w')$, as
is evident from their definition. By Prop.~\ref{prop:K(M)-M-Equivalence },
if $(M,w)\vDash\varphi$, then also $K(M,w)\vDash\varphi$, and by
the Hennessy-Milner Theorem, $K(M',w')\vDash\varphi$. Again by Prop.~\ref{prop:K(M)-M-Equivalence },
$(M',w')\vDash\varphi$.
\end{proof}

Our second emulation result inverts the first, showing that the class
of epistemic states can be embedded in the class of attention states.
As attention states have specific conditions for attention atom satisfaction,
equi-satisfaction holds only for the sublanguage $\mathcal{L}_{\Phi}\subseteq\mathcal{L}$
of formulas without attention atoms. As $\Phi$ can be arbitrarily
extended, we ignore this point when we use the result in showing undecidability
(Thm.~\ref{thm:All-is-undecidable}).
\begin{proposition}
\label{prop:static-attention-emulates-Kripke}For every epistemic
state $(K,w)$, there is an attention state $(M,v)$ such that for
all $\varphi\in\mathcal{L}_{\Phi}$, $(K,w)\vDash\varphi$ iff $(M,v)\vDash\varphi$
.
\end{proposition}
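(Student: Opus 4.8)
The plan is to leave the Kripke structure of $(K,w)$ untouched and merely equip it with a trivial attention resource function, exploiting the fact that $\mathcal{L}_{\Phi}$ contains no attention atoms and hence never consults $A$. Concretely, given $(K,w)=((W,R,V),w)$, I would set $(M,v)=((W,R,V,A),w)$, keeping $W$, $R$, $V$ and the actual world unchanged, and defining $A_{i}(u)=0$ for every $i\in I$ and $u\in W$ (any constant value in $\{0,\dots,N\}$ serves equally well).

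First I would verify that $(M,v)$ is a legitimate attention state per Def.~\ref{def: attention model}: $W$ is nonempty and finite, the $R_{i}$ are equivalence relations inherited from $K$, and $V$ is a valuation. The only condition requiring a check is that $uR_{i}u'$ implies $A_{i}(u)=A_{i}(u')$, which holds trivially since $A$ is constant. I would then establish the equivalence by structural induction on $\varphi\in\mathcal{L}_{\Phi}$. The crucial observation is that such $\varphi$ contain no attention atoms, so the atomic case concerns only $p\in\Phi$, where the two semantics agree verbatim: $K,w\vDash p$ iff $w\in V(p)$ iff $M,w\vDash p$ (Defs.~\ref{def: semantics attention language} and~\ref{def: semantics for epistemic states}). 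The clauses for $\top$, $\neg$, $\wedge$ and $K_{i}$ are identical across the two definitions and use the same relations $R_{i}$, so they transfer under the induction hypothesis with no appeal to $A$.

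The hard part, honestly, is that there is no hard part: the entire content of the proposition lies in the restriction to $\mathcal{L}_{\Phi}$, which sidesteps the one genuine mismatch between the two model classes. In an attention state the attention atoms are governed by $A$ through the satisfaction clauses, whereas in an epistemic Kripke model they are governed freely by $V$. Were equi-satisfaction demanded for all of $\mathcal{L}$, the construction would fail, since $A$ can realize only those attention valuations compatible with the constraint $uR_{i}u'\Rightarrow A_{i}(u)=A_{i}(u')$, while an epistemic model's $V$ may assign attention atoms arbitrarily (indeed inconsistently, e.g. making both $(\alpha_{i}<n)$ and $(\alpha_{i}=n)$ hold at a world). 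Thus the confinement to $\mathcal{L}_{\Phi}$ is exactly what renders the emulation possible, and the remark preceding the statement---that $\Phi$ may be arbitrarily extended---is what lets the subsequent undecidability argument (Thm.~\ref{thm:All-is-undecidable}) tolerate this restriction.
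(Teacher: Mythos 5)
Your proposal is correct and matches the paper's own (one-line) proof, which likewise just equips $K$ with an attention resource function; your choice of a constant $A$ is in fact slightly more careful than the paper's ``any $A$'', since it explicitly guarantees the constraint that $uR_{i}u'$ implies $A_{i}(u)=A_{i}(u')$. The surrounding discussion of why the restriction to $\mathcal{L}_{\Phi}$ is essential is accurate and consistent with the paper's remarks.
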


\begin{proof}
Adding to $K$ any attention resource function $A$ yields an attention
model $M$ for which $(M,w)$ is as desired.
\end{proof}

\subsection{Attention Actions and Epistemic Actions}
\begin{definition}
\label{def: action models} An \emph{action model} is a tuple $\mathcal{E}=(E,Q,pre,post)$
with $E\neq\emptyset$ a finite set of events; $Q\!:\!I\rightarrow\mathcal{P}(E^{2})$
assigns each $i\in I$ an equivalence relation $Q_{i}$;\linebreak
\mbox{$pre\!:\!E\rightarrow\mathcal{L}$}, \mbox{$post\!:\!E\!\rightarrow\!(At\!\rightarrow\!\mathcal{L})$}
assign each action a pre- and a post-condition, respectively. $\mathcal{E}$
is \emph{without postconditions} if for all $e\in E$, $post(e)(p)\!=\!p$.

For $e\in E$, $(\mathcal{E},e)$ is an \emph{epistemic action}, with
$e$ the \emph{actual event}. Let $\textsc{Post}$ and $\textsc{noPost}$
denote, respectively, the classes of all epistemic actions all epistemic
actions without postconditions.
\end{definition}

\begin{definition}
\label{def: standard product updates}Let $(K,w)=((W,R,V),w)$ be
an epistemic state for $At$ and $X=((E,Q,pre,post),e)$ an epistemic
action. The \emph{product update }of $(K,w)$ with $X=(\mathcal{E},e)$
is $(K,w)^{X}=((W^{X},R^{X},V^{X}),(w,e))$, where $W^{X}=\{(w,e)\in W\times E\colon M,w\vDash pre(e)\}$,
$((w,e),(v,f))\in R_{i}^{X}$ iff $(w,v)\in R_{i}$ and $(e,f)\in Q$
for all $i\in I$, and\textup{ $V^{X}(p)=\{(w,e)\in W^{X}:M,w\vDash post(e)(p)\}$.
}Definitions of, and notation for, \emph{applicability} is parallel
to Def.~\ref{def: Attention Update}.
\end{definition}

\begin{definition}
\label{def: equivalence of action models}For $Y=(\mathcal{A},\text{€},e')\in\textsc{all}$,
$X=(\mathcal{E},e)\in\textsc{Post}\cup\textsc{noPost}$, $Y$ is \emph{equivalent}
to $X$ iff for all attention states $(M,w)$, all $\varphi\in\mathcal{L}$,
$(M,w)^{Y}\vDash\varphi$ iff $(K(M,w))^{X}\vDash\varphi$.
\end{definition}

\begin{proposition}
\label{prop:AttUp-emulates-NoPost}For all $X\in\textsc{noPost}$,
there is an equivalent $Y\in\textsc{all}$.
\end{proposition}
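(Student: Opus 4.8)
The plan is to build, from a given postcondition-free epistemic action $X=(\mathcal{E},e)$ with $\mathcal{E}=(E,Q,pre,post)$ and $post(e)(p)=p$, an attention action that is \emph{inert with respect to attention}: one that neither lets any agent's attention refine the epistemic indistinguishability, nor causes any agent to spend attention. Concretely, I would set $Y=(\mathcal{A},\tau,e)$, where $\mathcal{A}=(E,Q,Q^{*},pre,c)$ reuses the events, the relations $Q$, the preconditions $pre$ and the actual event of $X$; takes $Q^{*}_{i}=\Delta_{E}=\{(f,f):f\in E\}$ (the identity relation) for every $i$; takes $c$ to be the zero cost function; and takes the question map $\tau$ with $\tau_{i}=\top$ for every $i$. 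First I would check that this is a legal attention action model in the sense of Def.~\ref{def: attention action model and action model}: $\Delta_{E}$ is an equivalence relation, and the zero cost function trivially satisfies $c_{i}(\top,f)=0$ and the $Q_{i}\cup Q^{*}_{i}$-invariance.

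The main step is to prove that for every attention state $(M,w)$ the Kripke rendition of the attention update equals the epistemic product update on the nose, i.e. $K((M,w)^{Y})=(K(M,w))^{X}$ as epistemic states. Granting this, the proposition follows at once: by Prop.~\ref{prop:K(M)-M-Equivalence } (clause~$iii$) applied to $(M,w)^{Y}$ we get $(M,w)^{Y}\vDash\varphi$ iff $K((M,w)^{Y})\vDash\varphi$, which by the claimed identity is $(K(M,w))^{X}\vDash\varphi$, for every $\varphi\in\mathcal{L}$ --- exactly the equivalence demanded by Def.~\ref{def: equivalence of action models}. So I would spend the bulk of the proof establishing the identity componentwise. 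For the domains, $W^{Y}=\{(w,f):M,w\vDash pre(f)\}$ and $W^{X}=\{(w,f):K(M,w),w\vDash pre(f)\}$ coincide by Prop.~\ref{prop:K(M)-M-Equivalence } (clause~$iii$), which also makes applicability of $Y$ to $(M,w)$ and of $X$ to $K(M,w)$ coincide. For the relations, the point is that $\tau_{i}=\top$ gives $Q^{*}_{i}[\tau_{i}\mid\neg\tau_{i}]=Q^{*}_{i}=\Delta_{E}$ (no pair is deleted, since everything entails $\top$), so both branches of the case split in Def.~\ref{def: Attention Update} return the event-component $Q_{i}\cup\Delta_{E}=Q_{i}$, uniformly in each agent's attention; hence $((w,f),(v,g))\in R^{Y}_{i}$ iff $(w,v)\in R_{i}$ and $(f,g)\in Q_{i}$, which is precisely the clause for $R^{X}_{i}$ in Def.~\ref{def: standard product updates}. (The degenerate case $A_{i}(w)=0$ is harmless here: since the two branches coincide, it does not matter which one the definition is read to select.)

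The delicate part --- and the step I expect to be the real obstacle --- is matching the \emph{valuations}, because attention is encoded differently on the two sides: propositionally, through the valuation, in the Kripke rendition, but through the resource function $A$ in the attention state. For ordinary atoms $p\in\Phi$ this is routine: trivial postconditions give $V^{X}(p)=\{(w,f):w\in V(p)\}$, which is exactly the truth-set of $p$ in $(M,w)^{Y}$, hence agrees with the $\Phi$-part of $V_{A^{Y}}$. The crux is the attention atoms $\Psi$. Here I would use that $\tau_{i}=\top$ forces $c_{i}(\tau_{i},f)=0$, so the updated resource is unchanged, $A^{Y}_{i}(w,f)=\max\{0,A_{i}(w)-0\}=A_{i}(w)$; consequently in $(M,w)^{Y}$ the atom $(\alpha_{i}=n)$ holds at $(w,f)$ iff $A_{i}(w)=n$, and similarly for $(\alpha_{i}<n)$. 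On the other side, trivial postconditions give $V^{X}((\alpha_{i}=n))=\{(w,f):K(M,w),w\vDash(\alpha_{i}=n)\}=\{(w,f):A_{i}(w)=n\}$ by the definition of the rendition valuation $V_{A}$ (Def.~\ref{def:K-transform}). Thus the $\Psi$-parts of $V_{A^{Y}}$ and $V^{X}$ coincide too, completing the identity $K((M,w)^{Y})=(K(M,w))^{X}$ and with it the proof. In short, the whole argument turns on arranging that \emph{no attention is spent and attention never discriminates}, so that a postcondition-free update and the attention update leave the attention atoms in exactly the same state.
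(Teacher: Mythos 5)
Your construction is exactly the paper's: reuse $E$, $Q$, $pre$ and the actual event, set $Q^{*}_{i}$ to the diagonal, the cost to zero, and every question to $\top$, so that the attention action degenerates to the given \textsc{noPost} action. The only difference is organizational --- you verify equivalence by showing $K((M,w)^{Y})$ and $(K(M,w))^{X}$ coincide componentwise and then invoke Prop.~\ref{prop:K(M)-M-Equivalence }, whereas the paper's appendix proof runs an induction on formula complexity (establishing $W^{X}=W^{Y}$ and $R^{X}=R^{Y}$ along the way) --- and your handling of the degenerate $A_{i}(w)=0$ case is a sensible reading of Def.~\ref{def: Attention Update}.
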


\begin{proof}
[Proof sketch] Let $X=((E,Q,pre,post),e)\in\textsc{noPost}$. To
emulate $X$, we build an attention action $Y=((E',Q',Q^{*},pre',c),\text{€, }e')$
where all agents pay attention to $\top$, and $Q^{*}=\{(e,e)\colon e\in E\}$.
Then $Y$ is essentially reduced to an epistemic action from $\textsc{noPost}$.
If $E'=E$, $Q'=Q$, $pre'=pre$, $e'=e$, then $Y$ is equivalent
to $X$.
\end{proof}

Jointly, Propositions~\ref{prop:K(M)-M-Equivalence }~and~\ref{prop:AttUp-emulates-NoPost}
show that attention states and actions can emulate the dynamics of
DEL, for epistemic actions in \textsc{noPost}. The converse of Prop.~\ref{prop:AttUp-emulates-NoPost}
is instead false: The dynamics of standard DEL cannot emulate those
of attention states and actions, if one restricts to epistemic actions
in \textsc{noPost}.
\begin{proposition}
\label{prop:no_Equivalent_noPost}There is a \mbox{$X\in \textsc{all}$}
with no equivalent \mbox{$Y\in\textsc{noPost}$}.
\begin{proof}
[Proof sketch] Attention actions may change the valuation of attention
atoms, which epistemic actions in \textsc{noPost} cannot.
\end{proof}

\end{proposition}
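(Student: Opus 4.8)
The plan is to exhibit a single attention action $X\in\textsc{all}$ whose update strictly decreases an agent's attention, and to contrast it with the structural invariant that every epistemic action in \textsc{noPost} leaves the valuation of \emph{all} atoms untouched — in particular the attention atoms of $\Psi$. Under the Kripke rendition of Def.~\ref{def:K-transform}, an agent's attention level is frozen into the static valuation $V_A$ as the truth-values of the atoms $(\alpha_i<n)$ and $(\alpha_i=n)$; since a \textsc{noPost} action by Def.~\ref{def: standard product updates} carries each atom's truth-value unchanged into the product, no such action can mimic an update that alters these truth-values. By Def.~\ref{def: equivalence of action models}, equivalence of $X$ with a candidate $Y\in\textsc{noPost}$ is refuted as soon as we produce one attention state $(M,w)$ and one formula $\varphi\in\mathcal{L}$ for which the biconditional $(M,w)^{X}\vDash\varphi \Leftrightarrow (K(M,w))^{Y}\vDash\varphi$ fails.

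First I would fix the witness, assuming the attention bound satisfies $N\ge 1$ (so that attention is a non-trivial resource). Let $(M,w)$ be the single-world attention state on $\{w\}$ with $A_i(w)=1$, so that $(M,w)\vDash(\alpha_i=1)$. Let $X=(\mathcal{A},\theta,e)$ be the attention action with one event $e$, $pre(e)=\top$ (so $X$ is applicable everywhere), $Q_i=Q_i^{*}=\{(e,e)\}$, cost $c_i(\varphi,e)=1$ for every $\varphi\neq\top$, and question map $\theta_i=p$ for some $p\in\Phi$; this is a legitimate member of \textsc{all} (indeed of \textsc{nfl}). By the attention clause of Def.~\ref{def: Attention Update}, $A_i^{X}(w,e)=\max\{0,A_i(w)-c_i(\theta_i,e)\}=0$, whence $(M,w)^{X}\vDash(\alpha_i=0)$ and in particular $(M,w)^{X}\not\vDash(\alpha_i=1)$.

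Next I would handle the \textsc{noPost} side. Suppose for contradiction that some $Y=(\mathcal{E},e_Y)\in\textsc{noPost}$ is equivalent to $X$. As $X$ is applicable to $(M,w)$, equivalence requires $Y$ to be applicable to $K(M,w)$ as well — otherwise the two actions do not agree even on where they are defined — so $(w,e_Y)\in W^{Y}$. Since $M,w\vDash(\alpha_i=1)$, the rendition has $w\in V_A((\alpha_i=1))$, and because $Y$ has trivial postconditions, Def.~\ref{def: standard product updates} yields $V^{Y}((\alpha_i=1))=\{(w',e')\in W^{Y}\colon w'\in V_A((\alpha_i=1))\}$. Hence $(w,e_Y)\in V^{Y}((\alpha_i=1))$, i.e. $(K(M,w))^{Y}\vDash(\alpha_i=1)$, contradicting $(M,w)^{X}\not\vDash(\alpha_i=1)$ under the assumed equivalence. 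Therefore no $Y\in\textsc{noPost}$ is equivalent to $X$, which proves the claim.

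I do not expect a deep obstacle; the argument is a bookkeeping contrast between the dynamic recomputation of $A_i$ in Def.~\ref{def: Attention Update} and the static preservation of $V$ in Def.~\ref{def: standard product updates}. The one point needing care is applicability inside Def.~\ref{def: equivalence of action models}: one must note that a candidate $Y$ not applicable where $X$ is cannot be equivalent to $X$, so that it is legitimate to assume $(w,e_Y)\in W^{Y}$ before invoking the valuation clause. A secondary caveat is the degenerate bound $N=0$, where $\Psi$ contains no atom able to change truth-value; this is why the construction presumes $N\ge 1$.
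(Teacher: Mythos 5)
Your proposal is correct and follows essentially the same route as the paper's own proof: exhibit an attention action whose update strictly decreases $A_i$, so that some attention atom changes truth value, and observe that a \textsc{noPost} action carries the valuation of every atom (including those in $\Psi$) unchanged into the product, so no such action can match the update. The only differences are cosmetic (your witness uses a single-world state and single-event action with $pre=\top$, versus the paper's two-world, two-event witness), and your explicit handling of applicability and of the degenerate bound $N=0$ is a reasonable tightening rather than a divergence.
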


If we do not restrict to \textsc{noPost}, then DEL can emulate all
dynamics induced by attention actions:
\begin{proposition}
\label{prop:is_Equivalent_post}For all $X\in\textsc{all}$, there
is an equivalent $Y\in\textsc{Post}$.
\end{proposition}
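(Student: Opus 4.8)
My plan is to reduce the statement to an isomorphism of Kripke renditions and to obtain $Y$ from $X$ by a finite event-splitting. Fix an attention action $X=(\mathcal{A},\cdot,e)$ with $\mathcal{A}=(E,Q,Q^{*},pre,c)$ and an arbitrary attention state $(M,w)$; throughout, write $c_{i}(\cdot,g)$ for the cost at $g$ of the question $X$ fixes for $i$, and $A(v)=(A_{j}(v))_{j\in I}$ for the attention profile of a world $v$. I will build an epistemic action $Y\in\textsc{Post}$ together with a bijection between the worlds of the attention update $(M,w)^{X}$ and those of the product update $(K(M,w))^{Y}$ that is an isomorphism of their Kripke renditions. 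Since $K(M,w)$ is an epistemic state and product update preserves epistemic states, $(K(M,w))^{Y}$ is itself an epistemic state; combining such an isomorphism with Prop.~\ref{prop:K(M)-M-Equivalence } (iii) applied to $(M,w)^{X}$ yields $(M,w)^{X}\vDash\varphi$ iff $(K(M,w))^{Y}\vDash\varphi$ for all $\varphi\in\mathcal{L}$, which is exactly the equivalence demanded by Def.~\ref{def: equivalence of action models}.

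The construction must overcome the single genuine mismatch between the two update notions. In Def.~\ref{def: Attention Update} the relation $R_{i}^{X}$ at a pair $((w,e),(v,f))$ depends on the attention $A_{i}(w)$ of the source world, through the test $c_{i}(\cdot,e)\gtrless A_{i}(w)$, whereas the product update of Def.~\ref{def: standard product updates} uses a world-independent relation $Q_{i}^{Y}$ on events. I resolve this by \emph{splitting events along attention profiles}: as $N$ is fixed and $I$ finite there are only $(N+1)^{|I|}$ profiles $\vec{a}\in\{0,\dots,N\}^{I}$, so for each non-actual $g\in E$ and each $\vec{a}$ I add a copy $g_{\vec{a}}$ with precondition $pre(g)\wedge\bigwedge_{j\in I}(\alpha_{j}=a_{j})$. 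In $K(M,w)$ this copy is then applicable at $v$ exactly when $M,v\vDash pre(g)$ and $A(v)=\vec{a}$, so $\vec{a}$ records the target world's profile. Because $R_{i}$ forces $A_{i}(w)=A_{i}(v)$ on related worlds, the threshold $c_{i}(\cdot,g)\gtrless A_{i}(w)$ may equivalently be read off the component $a_{i}$ of the copy, and I define $Q_{i}^{Y}$ between all events by transcribing the two-case clause of Def.~\ref{def: Attention Update} with $a_{i}$ in place of $A_{i}(w)$. Crucially, the \emph{actual} event is kept un-split, with precondition $pre(e)$, so that $Y$ is applicable to $K(M,w)$ precisely when $X$ is applicable to $(M,w)$; its attention-dependent links to split targets are again read off the target component, using $A_{i}(w)=A_{i}(v)$. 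Finally, postconditions do the attention bookkeeping: for a pinned copy $g_{\vec{a}}$ the new level $\max\{0,a_{i}-c_{i}(\cdot,g)\}$ is a constant, so $post(g_{\vec{a}})(\alpha_{i}=n)$ is set to $\top$ or $\neg\top$ accordingly, while for the un-split $e$ the truncated subtraction is expressed in the pre-update atoms by $post(e)(\alpha_{i}=n)=(\alpha_{i}=n+c_{i}(\cdot,e))$ for $n\geq 1$ and $post(e)(\alpha_{i}=0)=(\alpha_{i}<c_{i}(\cdot,e)+1)$ (with $\top/\neg\top$ at the range boundaries), analogously for the $(\alpha_{i}<n)$ atoms, and $post(\cdot)(p)=p$ for $p\in\Phi$.

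It remains to verify that the map $\iota$ sending $(v,e)\mapsto(v,e)$ and $(v,g)\mapsto(v,g_{A(v)})$ for $g\neq e$ is the desired isomorphism. That $\iota$ is a bijection between the worlds of $(M,w)^{X}$ and of $(K(M,w))^{Y}$ is immediate from the precondition design; the \textbf{Atoms} clause holds because $\Phi$-atoms are untouched while the postconditions were chosen exactly so that the attention atoms agree with $A^{X}_{i}(v,g)=\max\{0,A_{i}(v)-c_{i}(\cdot,g)\}$; and the relational clause holds by the definition of $Q_{i}^{Y}$ together with the identity $A_{i}(v)=A_{i}(u)$ on $R_{i}$-linked worlds, which guarantees that reading the threshold off the source copy, the target copy, or the un-split actual event all coincide. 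The main obstacle is precisely this relational step — forcing a world-independent $Q_{i}^{Y}$ to reproduce the world-dependent $R_{i}^{X}$ while keeping the actual event's applicability attention-agnostic — and it is exactly here that $A_{i}(w)=A_{i}(v)$ for $wR_{i}v$ carries the argument. Note that the postconditions are essential: they are what let $Y$ overwrite the attention atoms, the very capability whose absence in \textsc{noPost} underlies Prop.~\ref{prop:no_Equivalent_noPost}.
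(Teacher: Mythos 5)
Your proposal is correct and follows the same overall strategy as the paper's proof: split each event of $X$ into finitely many copies whose preconditions pin down enough attention information to turn the world-dependent case split of Def.~\ref{def: Attention Update} into a world-independent relation on events, use postconditions to rewrite the attention atoms, and verify that $(M,w)^{X}$ and $K(M,w)^{Y}$ are isomorphic, transferring truth via Prop.~\ref{prop:K(M)-M-Equivalence }. Two design choices differ, and each buys something. First, you index copies by full attention profiles $\vec{a}\in\{0,\dots,N\}^{I}$ ($(N+1)^{|I|}$ copies per event), whereas the paper records only, per agent, the single bit of whether that agent can afford its question ($In\in2^{I}$, so $2^{|I|}$ copies); the paper's split is smaller but forces the uniform $next_{f}(\alpha_{i}=n)$ postconditions, while yours makes the postconditions of pinned copies literal constants at the cost of a larger action. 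Second, you leave the actual event un-split and push its attention bookkeeping into pre-update atoms via $post(e)(\alpha_{i}=n)=(\alpha_{i}=n+c_{i}(\cdot,e))$ and its boundary cases; this makes the designated actual event of $Y$ independent of the state it is applied to and makes applicability of $Y$ coincide with that of $X$ by construction \textemdash{} a point the paper's version leaves implicit, since it designates $e$ as the actual event of $Y$ while $E'$ contains only the subscripted copies $f_{In}$. Your key observation that $wR_{i}v$ forces $A_{i}(w)=A_{i}(v)$, so the affordability threshold may be read off either endpoint's copy, is exactly the load-bearing fact in the paper's clause for $Q'$ as well. The one loose end you share with the paper is that neither argument verifies that the constructed event relation is an equivalence relation, as Def.~\ref{def: action models} requires; reflexivity and symmetry are routine, but transitivity across the mixed clauses deserves at least a remark.
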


\begin{proof}
[Proof sketch] Let $X=((E,Q,Q^{*},pre,c),\text{€},e)$. We build
an exponentially larger $Y=((E',Q',pre',post),e)\in\textsc{Post}$.
Let $2^{I}$ be the set of maps $In:I\rightarrow\{0,1\}$. Intuitively,
$In(i)=1$ represents that $i$ has enough attention to pay for $\text{€}_{i}$.
Let $E'=\{f_{In}\colon f\in E,In\in2^{I}\}$. Setting $pre'(f_{In})$
to 
\[
pre(f)\wedge{\textstyle \bigwedge_{In(i)=1}}\alpha_{i}\ge c_{i}(\text{€}_{i},f)\wedge{\textstyle \bigwedge_{In(j)=0}}\alpha_{j}<c_{j}(\text{€}_{j},f)
\]
ensures that $E'$ contains for each $f\in E$ and each $In\in2^{I}$
a unique event $f_{In}$ with $pre'(f_{In})$ satisfied at worlds
that satisfy $pre(f)$ and where exactly agents in $\{i\in I\colon In(i)=1\}$
have attention exceeding the cost of $\text{€}_{i}$. The construction
ensures that for any attention state $(M,w)$, $(w,f_{In})\in K(M,w)^{Y}$
iff ($(w,f)\in(M,w)^{X}$ and\emph{ }$In(i)=1$ iff $A_{i}(w)\geq c_{i}(\text{€}_{i},f)$).
$(M,w)$ and $K(M,w)$ thus have the same number of worlds, and for
all $p\in\Phi$, $(w,f_{In})$ satisfies $p$ iff $(w,f)$ does.

To make $Y$ correctly update the attention atoms, we use that $\mathcal{L}$
can express ``$n$ will be the attention value of agent $i$ after
the event $f$'', writing $next_{f}(\alpha_{i}=n)$ as abbreviation
for
\begin{align*}
((\alpha_{i}=n)\wedge & (\alpha_{i}=\max\{0,n-c_{i}(\text{€}_{i},f)\}))\vee\\
(\neg(\alpha_{i}=n)\wedge & (\alpha_{i}=\min\{n+c_{i}(\text{€}_{i},f),N\})
\end{align*}
The first disjunct reads ``$n$ is $i$'s current attention value
and $c_{i}(\text{€}_{i},f)$\textemdash the cost of what $i$ pays
attention to in $f$\textemdash is zero'' and the second ``$n$
is not the current attention value, but $n$ is the current value
minus $c_{i}(\text{€}_{i},f)$''. We use $next_{f}(\alpha_{i}=n)$
in assigning postconditions, with two key clauses being $post(f_{In})(\alpha_{i}=n)=next_{f}(\alpha_{i}=n)$
and $post(f_{In})(\alpha_{i}<n)={\textstyle \bigvee_{0\le j<n}}next_{f}(\alpha_{i}=j)$,
for all $f_{In}\in E'$.

Finally, we let $(f_{In},g_{In'})\in Q'$ iff either (1) $(f,g)\in Q_{i}$
or (2) $(f,g)\in Q_{i}^{*}$ and $pre'(f_{In})\vDash\alpha_{j}<c_{i}(\text{€}_{i},f)$,
or (3) $(f,g)\in Q_{i}^{*}$ and $pre'(f_{In})\vDash\text{€}_{i}$
iff $pre'(g_{In'})\vDash\text{€}_{i}$. This ensures that $(w,f)$
and $(v,g)$ are related for $i$ in $(M,w)^{X}$ iff $(w,f_{In})$
and $(v,g_{In})$ are related for $i$ in $K(M,w)^{Y}$.

The construction ensures that $(M,w)^{X}$ and $K(M,w)^{Y}$ are isomorphic,
entailing that $X$ and $Y$ are equivalent.
\end{proof}

To finalize our emulation results, we show Prop. \ref{prop:is_Equivalent_post}'s
converse false.

\begin{proposition}
\label{prop:Post_richer_than_All}There exists a $X\in\textsc{Post}$
with no equivalent $Y\in\textsc{all}$.
\end{proposition}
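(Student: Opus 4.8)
The plan is to exploit the one structural capability that separates $\textsc{Post}$ from $\textsc{all}$: an epistemic action with postconditions can change the truth value of an ordinary proposition atom, whereas an attention action cannot. This is dual to the situation in Prop.~\ref{prop:no_Equivalent_noPost}, where the separating feature was that attention actions \emph{can} change attention atoms while \textsc{noPost} cannot; here the roles of the two atom-sets are reversed. The key fact I would isolate first is that, by Definition~\ref{def: Attention Update}, every attention update satisfies $V^{X}(p)=\{(w,e)\in W^{X}\colon w\in V(p)\}$ for all $p\in\Phi$. Hence for any attention state $(M,w)$ and any applicable attention action $Y\in\textsc{all}$ with actual event $e'$, the actual world $(w,e')$ of $(M,w)^{Y}$ satisfies a proposition atom $p$ iff $w$ does; attention updates never alter $\Phi$-valuations at surviving worlds.

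Next I would fix an atom $p\in\Phi$ and take for $X$ the simplest $\textsc{Post}$-action that sets it true: a single-event action model $\mathcal{E}=(E,Q,pre,post)$ with $E=\{e\}$, trivial relations $Q_{i}=\{(e,e)\}$, precondition $pre(e)=\top$, postcondition $post(e)(p)=\top$, and $post(e)(q)=q$ for every other atom $q$. Because $pre(e)=\top$, this $X$ is applicable to every epistemic state; and by the valuation clause of the product update (Def.~\ref{def: standard product updates}), for every attention state $(M,w)$ the update $K(M,w)^{X}$ makes $p$ true at its actual world, so $K(M,w)^{X}\vDash p$.

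I would then pick a witnessing attention state, namely any $(M,w)$ with $w\notin V(p)$ (for concreteness, the single-world model with $A_{i}=N$ for all $i$ and $p$ false), and argue that no $Y\in\textsc{all}$ is equivalent to $X$. Suppose some $Y$ were equivalent. If $Y$ is applicable to $(M,w)$, then by the $\Phi$-preservation noted above $(M,w)^{Y}\not\vDash p$, while $K(M,w)^{X}\vDash p$; this contradicts equivalence on the formula $\varphi=p$. If instead $Y$ is not applicable to $(M,w)$, then $(M,w)^{Y}$ is undefined although $K(M,w)^{X}$ is defined and satisfies $\top$, so the two sides of Definition~\ref{def: equivalence of action models} cannot agree on all $\varphi$ (take $\varphi=\top$) and equivalence fails already. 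Either way $Y$ is not equivalent to $X$, which proves the proposition.

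The step requiring the most care is the bookkeeping around applicability, since Definition~\ref{def: equivalence of action models} quantifies over \emph{all} attention states while product update is only a partial operation. Choosing $pre(e)=\top$ is what neutralizes this difficulty: it makes $X$ applicable everywhere and thereby forces any equivalent $Y$ to be applicable everywhere as well, after which the single, uniform valuation-preservation argument settles the matter. Everything else reduces to the immediate observation, read directly off the definition of $V^{X}$, that attention updates leave $\Phi$ untouched.
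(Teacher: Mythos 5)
Your proposal is correct and takes essentially the same route as the paper: both exhibit a single-event $\textsc{Post}$-action with precondition $\top$ whose postcondition flips a proposition atom $p\in\Phi$, and then observe that by Definition~\ref{def: Attention Update} every attention update satisfies $V^{X}(p)=\{(w,e)\in W^{X}\colon w\in V(p)\}$, so no $Y\in\textsc{all}$ can match it on the formula $p$. The only (minor) difference is that you set $p$ true at a world falsifying it while the paper negates $p$ at a world satisfying it, and you additionally spell out the applicability bookkeeping that the paper leaves implicit.
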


\begin{proof}
[Proof sketch] Epistemic actions may change the valuation of atoms
other than the attention atoms. This cannot be done by attention actions.
\end{proof}

\section{\label{sec:Planning}Epistemic Planning with Attention}

Finally, we turn to epistemic planning with attention. The following
definitions follow those for DEL-based epistemic planning, as in \cite{Bolander2020}.
\begin{definition}
An \emph{attention planning task} $T=(s_{0},\Sigma,\varphi_{g})$
consists of an (initial) attention state $s_{0}$; a finite set of
attention actions $\Sigma$; and a goal formula $\varphi_{g}\in\mathcal{L}$.
A \emph{solution} to $T$ is a finite sequence $X_{1},...,X_{n}$
of actions from $\Sigma$ applicable to $s_{0}$ such that $s_{0}{}^{X_{1},...,X_{n}}\vDash\varphi_{g}$.
\end{definition}

\begin{definition}
For $X\subseteq\textsc{all}$, denote by $\mathcal{T}_{X}$ the class
of attention planning tasks $T=(s_{0},\Sigma,\varphi_{g})$ with $\Sigma\subseteq X$.
Given $\mathcal{T}_{X}\subseteq\mathcal{T}_{\textsc{all}}$, denote
by $\mathsf{PlanEx}\text{-}\mathcal{T}_{X}$ the decision problem
(called the \emph{plan existence problem} on $\mathcal{T}_{X}$):
Given a planning task $T\in\mathcal{T}_{X}$, does $T$ have a solution?
\end{definition}

For DEL-based epistemic planning, the general plan existence problem
is undecidable \cite{BolanderBirkegaard2011}. The same holds when
attention is involved:
\begin{theorem}
\label{thm:All-is-undecidable}$\mathsf{PlanEx}\text{-}\mathcal{T}_{\textsc{all}}$
is undecidable.
\end{theorem}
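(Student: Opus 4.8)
The plan is to reduce from the plan existence problem for DEL actions \emph{without} postconditions, which is undecidable \cite{BolanderBirkegaard2011}, using the emulation results of this section. Fix a DEL planning instance $T=(s_0,\Sigma,\varphi_g)$ whose atoms are taken among $\Phi$, so that $\varphi_g\in\mathcal{L}_\Phi$ and every precondition lies in $\mathcal{L}_\Phi$, with $\Sigma\subseteq\textsc{noPost}$. I would map $T$ to an attention planning task $T'=(s_0',\Sigma',\varphi_g)$ as follows: let $s_0'$ be an attention state obtained from $s_0$ via Prop.~\ref{prop:static-attention-emulates-Kripke} (concretely, $s_0$ with any admissible attention resource function adjoined), and for each $X\in\Sigma$ let $Y_X\in\textsc{all}$ be the equivalent attention action granted by Prop.~\ref{prop:AttUp-emulates-NoPost}, setting $\Sigma'=\{Y_X\colon X\in\Sigma\}$. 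This map is plainly computable, and $T'\in\mathcal{T}_{\textsc{all}}$ since $\Sigma'\subseteq\textsc{all}$.

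The heart of the argument is the claim that $T$ is solvable iff $T'$ is solvable, which I would extract from one invariant proved by induction on the length of an action sequence: for every $X_1,\dots,X_k\in\Sigma$ with emulators $Y_{X_1},\dots,Y_{X_k}$, the sequence $Y_{X_1},\dots,Y_{X_k}$ is applicable to $s_0'$ iff $X_1,\dots,X_k$ is applicable to $s_0$, and in that case the Kripke rendition $K\bigl((s_0')^{Y_{X_1},\dots,Y_{X_k}}\bigr)$ is isomorphic, on its $\Phi$-valuation, to $s_0^{X_1,\dots,X_k}$. Granting the invariant, Prop.~\ref{prop:K(M)-M-Equivalence } (truth preservation under $K$) forces the two terminal states to agree on all of $\mathcal{L}_\Phi$, hence on $\varphi_g$; so a solution of $T$ lifts to a solution of $T'$ and conversely, and a decision procedure for $\mathsf{PlanEx}\text{-}\mathcal{T}_{\textsc{all}}$ would decide \textsc{noPost} DEL planning, a contradiction.

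The key step, and the main obstacle, is the inductive step: showing that passing to the Kripke rendition \emph{commutes} with the product update for the emulating actions, $K\bigl((M,w)^{Y_X}\bigr)\cong\bigl(K(M,w)\bigr)^X$. Mere modal equivalence, which is all that Def.~\ref{def: equivalence of action models} literally asserts, is not preserved under further updates, so I must invoke the stronger structural fact visible in the construction behind Prop.~\ref{prop:AttUp-emulates-NoPost}: there $Y_X$ keeps $E'=E$, $Q'=Q$, $pre'=pre$, sets all questions to $\top$ (hence of zero cost, so every $A_i$ is left fixed), and takes $Q^*$ to be the diagonal. Since each $Q_i$ is reflexive, $Q_i\cup Q_i^*=Q_i$ and $Q_i^*[\top\mid\neg\top]=Q_i^*$, so the attention product relation $R_i^{Y_X}$ coincides exactly with the DEL product relation $R_i^{X}$; the surviving worlds are the same (identical preconditions) and the $\Phi$-valuations agree because $X\in\textsc{noPost}$ leaves atoms untouched. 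This yields the one-step isomorphism, and applicability carries across because it turns only on satisfaction of $pre(e)\in\mathcal{L}_\Phi$, which the invariant already transports between the two sides.

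A final bookkeeping point I would settle is the harmless mismatch on attention atoms: $K(s_0')$ may disagree with $s_0$ on the valuation of atoms in $\Psi$, but since $\varphi_g$ and every precondition lie in $\mathcal{L}_\Phi$ and \textsc{noPost} actions never alter any atom, this discrepancy is invisible to the planning task. Restricting the isomorphism to the $\Phi$-valuation throughout therefore suffices, and the reduction goes through.
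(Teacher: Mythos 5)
Your proof is correct and follows essentially the same route as the paper's: a reduction from undecidable \textsc{noPost} DEL planning via Propositions~\ref{prop:static-attention-emulates-Kripke} and~\ref{prop:AttUp-emulates-NoPost}. The paper's own proof is a two-line appeal to those propositions; you additionally (and rightly) note that the one-shot modal equivalence of Def.~\ref{def: equivalence of action models} must be upgraded to the step-wise structural correspondence visible in the construction of Prop.~\ref{prop:AttUp-emulates-NoPost} in order to survive iterated updates, a detail the paper leaves implicit.
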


\begin{proof}
The plan existence problem for DEL epistemic actions with preconditions
of modal depth at most $n$ and without postconditions $\mathsf{PlanEx}\text{-}\mathcal{T}(n,-1)$
is undecidable for $n\geq2$ \cite{Bolander2020}. By Propositions
\ref{prop:static-attention-emulates-Kripke} and \ref{prop:AttUp-emulates-NoPost},
for each epistemic planning task in $\mathsf{PlanEx}\text{-}\mathcal{T}(n,-1)$,
there is an equivalent attention planning task in $\mathsf{PlanEx}\text{-}\mathcal{T}_{\textsc{all}}$,
which is hence undecidable.
\end{proof}

For the class of \textsc{nfl} attention actions, we obtain a more
encouraging result:
\begin{theorem}
\label{thm:NFL-planning-is-decidable}$\mathsf{PlanEx}\text{-}\mathcal{T}_{\textsc{nfl}}$
is decidable.
\end{theorem}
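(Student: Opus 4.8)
The plan is to show that, starting from $s_0$, only finitely many attention states are reachable up to bisimulation by applicable \textsc{nfl} sequences from $\Sigma$, and then to decide $\mathsf{PlanEx}\text{-}\mathcal{T}_{\textsc{nfl}}$ by a terminating search of this finite space, checking $\varphi_g$ at each state (model checking an $\mathcal{L}$-formula on a finite attention state is routinely decidable). By Prop.~\ref{prop:Hennessy-Milner} and Lemma~\ref{lem:bisim.contraction.equivalence}, bisimilar states satisfy the same goal, so it suffices to work throughout with bisimulation contractions; I will also use that the update $\otimes$ preserves attention bisimulation, which follows from Prop.~\ref{prop:is_Equivalent_post} together with the standard fact that DEL product update with postconditions preserves bisimulation. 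This makes contracting after each step sound.

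The engine of the argument is that attention is a finite, monotonically decreasing resource which, in \textsc{nfl} actions, is spent exactly when learning occurs. The key observation is that $Q_i^{*}=E\times E$ forces the cost $c_i(\cdot,e)$ to be independent of the event $e$, so an \textsc{nfl} update decrements each agent's attention uniformly: writing $\bar c_i$ for the event-independent cost of agent $i$'s question, $A_i^{X}(w,e)=\max\{0,A_i(w)-\bar c_i\}$. I would therefore track the measure $M^{\sigma}=\sum_{i\in I}m_i^{\sigma}$ where $m_i^{\sigma}=\max_{w}A_i^{\sigma}(w)$, a non-increasing integer in $\{0,\dots,|I|\cdot N\}$. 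The heart of the proof is the dichotomy: an \textsc{nfl} step either (Type A) has some agent $i$ asking a non-trivial question (so $\bar c_i\ge 1$) while $m_i^{\sigma}\ge 1$, whence $m_i$ strictly drops and $M^{\sigma}$ decreases; or (Type B) leaves the attention function unchanged ($A^{X}=A$), whence Lemma~\ref{lem:eq.to.pub.ann} yields $s_0^{\sigma}\otimes X\leftrightarroweq s_0^{\sigma}\otimes X!$, i.e. $X$ acts as the public announcement of its background information and, up to bisimulation, merely restricts the model to a submodel while preserving attention (as the question in $X!$ is $\top$, costing $0$). Since $M^{\sigma}\le |I|\cdot N$ and strictly decreases at every Type A step, every applicable sequence contains at most $|I|\cdot N$ Type A steps.

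From this I derive a uniform size bound on reachable contractions. Along the ``contract-as-you-go'' trajectory, a Type A step enlarges the contracted world count by a factor of at most $s:=\max_{X\in\Sigma}|E_X|$, whereas a Type B step, being a submodel restriction up to bisimulation, does not increase it. As at most $|I|\cdot N$ Type A steps ever occur, every reachable state has a bisimulation contraction of size at most $B:=w_0\cdot s^{\,|I|\cdot N}$, where $w_0$ is the number of worlds of $s_0$. Since attention updates introduce no new proposition atoms (there are no postconditions on $\Phi$, so $(w,e)$ inherits the $\Phi$-valuation of $w$) and the attention atoms $\Psi$ are finite, the only atoms needed to tell worlds apart are those occurring in the task, a finite set. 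Up to isomorphism there are finitely many attention models over a finite atom set, the finite agent set $I$, and at most $B$ worlds; hence the reachable set is finite up to bisimulation. A search from $\mathrm{contract}(s_0)$ that expands each applicable action, contracts the result, and prunes states already seen (bisimilarity of finite models being decidable by partition refinement) therefore terminates and visits every reachable class; answering \textsc{yes} iff some visited state satisfies $\varphi_g$ decides the problem.

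The main obstacle is establishing the Type A / Type B dichotomy, and in particular pinning down that in the \textsc{nfl} case ``no attention spent'' and ``no learning'' coincide, so that every non-decreasing step collapses, via Lemma~\ref{lem:eq.to.pub.ann}, to a background announcement that cannot grow the state up to bisimulation. This is precisely what halts unbounded model growth once attention is exhausted, and it hinges on both \textsc{nfl} constraints acting together: $Q_i^{*}=E\times E$ gives a uniform and hence cleanly monotone attention cost (and makes background information the only default-learnable content), while strict positivity of non-trivial question costs ensures learning is never free. A secondary point to handle carefully is the soundness of contracting at each step, i.e. that $\otimes$ respects attention bisimulation, which I would justify through the postcondition emulation of Prop.~\ref{prop:is_Equivalent_post}.
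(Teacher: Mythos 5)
Your proposal is correct and follows essentially the same route as the paper's proof: its two pillars --- attention is a bounded, non-increasing resource, so any applicable sequence contains only boundedly many attention-consuming steps, and any step with $A^{X}=A$ reduces via Lemma~\ref{lem:eq.to.pub.ann} to a background announcement that cannot grow the bisimulation contraction --- are exactly the paper's two points. Your write-up is somewhat more explicit than the paper's (the uniform bound $w_{0}\cdot s^{|I|\cdot N}$ on contracted state size, the observation that $Q_{i}^{*}=E\times E$ makes costs event-independent, and the justification that $\otimes$ respects attention bisimulation so that contracting after each step is sound), but the underlying argument is the same.
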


\begin{proof}
Let $T=(s_{0},\Sigma,\varphi_{g})$ be an attention planning task
with $\Sigma\subseteq\text{\textsc{nfl}}$. We show that checking
if $T$ has a solution is decidable by showing the claim that for
any sequence $(X_{k})_{k\in\mathbb{N}}$ of $\Sigma$-actions applicable
to $s_{0}$, there is an $n\in\mathbb{N}$ such that for all $Y\in\Sigma$,
$s_{0}{}^{X_{1},...,X_{n}}\leftrightarroweq s_{0}{}^{X_{1},...,X_{n},Y}$.
Hence, only a finite set of plans needs to be checked to determine
if a solution exists.

We tacitly identify attention states with their bisimulation contractions
(Def.~\ref{def:bisim.contraction}), justified by Lemma~\ref{lem:bisim.contraction.equivalence}.
This makes the reference to the cardinality of sets of worlds meaningful. 

Let $(M,w)=(W,R,V,A,w)$ be any attention state. Applying any $X\in\Sigma$
to $(M,w)$ will either consume attention for some agent (so $A^{X}\neq A$),
or it will not consume any attention ($A=A^{X}$). The claim then
follows from two points:

First, as attention is finite, any applicable sequence of actions
will eventually stop consuming it: For any $X\in\Sigma$, if $A^{X}\neq A$,
then for some $i\in I$, some $w\in W$ and some event $e$ from $X$,
$A_{i}^{X}(v,e)<A_{i}(v)$. Hence, as $\{A_{i}(v)\colon i\in I,v\in W\}$
is bounded, for any sequence $(X_{k})_{k\in\mathbb{N}}$ of $\Sigma$-actions
applicable to $(M,w)$, there is an $n\in\mathbb{N}$ such that for
all $X\in\Sigma$, $A^{X_{1},...,X_{n}}=A^{X_{1},...,X_{n},X}$.

Second, as $W$ is finite, any sequence of actions none of which consume
attention will eventually reach a fixed point: Let $(X_{k})_{k\in\mathbb{N}}$
be a sequence of $\Sigma$-actions applicable to $(M,w)$ such that
$A^{X_{1},...,X_{k}}=A^{X_{1},...,X_{k},X_{k+1}}$. By Lemma \ref{lem:eq.to.pub.ann},
$|W^{X_{1},...,X_{k},X_{k+1}}|<|W^{X_{1},...,X_{k}}|$ (as we identify
models with their bisimulation contraction and as each $X_{k}!$ has
a single event $e^{!}$) or $(M,w)^{X_{1},...,X_{k},X_{k+1}}\leftrightarroweq(M,w)^{X_{1},...,X_{k}}$
(if $M,v\vDash pre(e^{!})$ for all $v\in W$). As $W$ is finite,
the first disjunct can eventually not obtain, so a fixed point is
reached.

The two points jointly imply the claim: for any sequence $(X_{k})_{k\in\mathbb{N}}$
of $\Sigma$-actions applicable to $s_{0}$, the first ensures that
after some $n\geq0$ steps, no more attention is consumed, and the
second implies that after (additional) $m\geq0$ steps, a fixed point
is reached. Hence, finally, if $\varphi_{g}$ is not reached by the
fixed point, $(X_{j})_{j\leq n+m}$ is not a solution.
\end{proof}

\section{\label{sec:Final-Remarks}Final Remarks}

Our main results show that the plan existence problem for epistemic
planning with attention is in general undecidable (Theorem~\ref{thm:All-is-undecidable}),
but decidable for No Free Lunch (\textsc{nfl}) actions (Theorem~\ref{thm:NFL-planning-is-decidable}).
As \textsc{nfl} actions are of special interest, this is an encouraging
result for epistemic planning with attention.

As (DEL-based) epistemic planning is in general undecidable, the emulation
results of Sec.~\ref{sec:Emulation-Results} makes Theorem~\ref{thm:All-is-undecidable}
unsurprising. However, Theorem~\ref{thm:NFL-planning-is-decidable}
strikes a sharp contrast with other results in epistemic planning:

Epistemic planning task classes have earlier been investigated by
number of agents, with the single-agent case decidable \cite{BolanderBirkegaard2011},
and the $n$-agent $n\geq2$ cases undecidable \cite{Aucher2013}
(both without postconditions). Ensuing, decidable fragments have been
sought in the hierarchy of classes $\mathcal{T}(m,n)$ allowing epistemic
actions with modal depth of pre- and postconditions at most $m$ and
$n$, respectively. \cite{Bolander2020} provide a recent survey,
contributing so that the only open question concerns $\mathcal{T}(1,-1)$,
`$-1$' referring to no postconditions. Of special interest to this
paper are the classes that attention planning can emulate ($\mathcal{T}(m,0)$)
or can be emulated by ($\mathcal{T}(m,1)$). The case for $\mathcal{T}(0,0)$
is decidable, but undecidable for $\mathcal{T}(m,0),m\geq1$ and $\mathcal{T}(m,1),m\geq0$
\cite{Bolander2020}. However, the emulations of Sec.~\ref{sec:Emulation-Results}
makes it evident that the class $\mathcal{T}_{\textsc{nfl}}^{\textsc{del}}$
of epistemic planning tasks obtained by emulating $\mathcal{T}_{\textsc{nfl}}$
is a proper subclass of $\bigcup_{m\in\mathbb{N}}\mathcal{T}(m,0)\cup\mathcal{T}(m,1)$
with $\mathcal{T}_{\textsc{nfl}}^{\textsc{del}}\cap(\mathcal{T}(m,0)\cup\mathcal{T}(m,1))$
non-empty for every $m\in\mathbb{N}$. By Theorem~\ref{thm:NFL-planning-is-decidable},
the plan existence problem for $\mathcal{T}_{\textsc{nfl}}^{\textsc{del}}$
is decidable, for any number of agents. Hence, the paper's results
finds a class of epistemic planning tasks that cuts across the oft
studied classes, for which the plan existence problem is decidable,
and which, by allowing arbitrary preconditions, is still reasonably
expressive.\medskip{}

\noindent We conclude with remarks on future research. First, regarding
complexity, then we venture that the plan existence problem for $\mathcal{T}_{\textsc{nfl}}$
is NP-complete, as this is the case for epistemic planning with public
announcements \cite{Bolander2015}. Second, we believe $\mathcal{L}$
is sufficiently expressive to obtain a complete axiom system for attention
states. This, the emulations and existing completeness results for
epistemic actions with postconditions \cite{Ditmarsch_Kooi_ontic}
would yield a complete dynamic logic for attention actions. However,
the emulations enforce an exponential blowup that would spill into
reduction axioms. It may therefore be desirable to define such directly
for attention actions. Finally: The model assumes common knowledge
of what agents pay attention to. This may be dropped by adding indistiguishability
between attention maps, at the cost of a more elaborate product. We
have left this construction for longer work.

\subsubsection*{Acknowledgments.}

We thank the Center for Information and Bubble Studies, funded by
the Carlsberg Foundation. RKR was partially supported by the DFG-ANR
joint project \emph{Collective Attitude Formation} {[}RO 4548/8-1{]}.

\vfill{}

\pagebreak{}

\begin{center}
	\textbf{\LARGE{}Epistemic Planning with Attention as a Bounded Resource}\\
	\textbf{\LARGE{}Proof Appendix}{\LARGE\par}
	\par\end{center}

\medskip{}

\noindent This file contains the proposition statements and full proofs
for all proof sketches.

\medskip{}

Throughout the file, we write $M^{X}=(W^{X},R^{X},V^{X},A^{X}),$where,
by Def.~\ref{def: Attention Update}, $(M,w)^{X}=((W^{X},R^{X},V^{X},A^{X}),w^{X})$.
Similarly, where $K(M,w)^{Y}=((W^{Y},R^{Y},V_{A}^{Y}),w^{Y})$, we
write $K(M)^{Y}=(W^{Y},R^{Y},V_{A}^{Y})$.

\medskip{}

\noindent \textbf{Lemma~\ref{lem:bisim.contraction.equivalence}.}
For any attention state $(M,w)$, for all $\varphi\in\mathcal{L}$,
$(M,w)_{\leftrightarroweq}\vDash\varphi$ iff $(M,w)\vDash\varphi$.
\begin{proof}
	Clearly, $(M,w)_{\leftrightarroweq}=((W',R',V',A'),[w])$ is an attention
	state. Moreover, $(M,w)_{\leftrightarroweq}=((W',R',V',A'),[w])$
	is bisimilar to $(M,w)=((W,R,V,A),w)$, witnessed by $Z=\{(w,[w])\colon w\in W\}$.
	We show that this is the case by showing that $Z$ is a bisimulation:
	Let \textsc{$vZ[v]$.}
	\begin{description}
		\item [{(Atoms)}] By Def.~\ref{def:bisim.contraction}, $v\in V(p)\in(M,w)$
		iff $[v]\in V'(p)\in(M,w)_{\leftrightarroweq}$ for all $p\in\Phi$,
		and $A_{i}(v)=A'_{i}([v])$ for all $i\in I$;
		\item [{(Forth)}] Assume $vR_{i}u$ for some $i\in I$. By Def.~\ref{def:bisim.contraction},
		$[v]R'_{i}[u]$, and by construction of $Z$, $uZ[u]$.
		\item [{(Back)}] Assume $[v]R'_{i}[u]$ for some $i\in I$. By Def.~\ref{def:bisim.contraction},
		there exists a $v\in[v]$, and a $u\in[u]$, with $vR_{i}u$. By construction
		of $Z$, $uZ[u]$.
	\end{description}
	Hence, $(M,w)\leftrightarroweq(M,w)_{\leftrightarroweq}$, and the
	conclusion follows by Prop.~\ref{prop:Hennessy-Milner}.
\end{proof}

\noindent \textbf{Proposition \ref{prop:K(M)-M-Equivalence }.} For
any attention state $(M,w)$, $K(M,w)$ is
\begin{description}
	\item [{$i)$}] \noindent an epistemic state; 
	\item [{$ii)$}] \noindent isomorphic to $(M,w)$; 
	\item [{$iii)$}] \noindent for all $\varphi\in\mathcal{L}$, $K(M,w)\vDash\varphi$
	iff $(M,w)\vDash\varphi$.
\end{description}
\begin{proof}
	Let $(M=(W,R,V),w)$ be an attention state for $At=\Phi\cup\Psi$,
	and $K(M,w)=(W,R,V_{A})$ be the Kripke rendition of $(M,w)$.
	\begin{description}
		\item [{$i)$}] $K(M,w)$ is an epistemic Kripke model for $At$: $W$
		is non-empty; $R$ assigns equivalence relations on $W$, and $V_{A}$
		is clearly a valuation for $At$. 
		\item [{$ii)$}] Let $f:W\rightarrow W$ be the identity function. Clearly,
		$f$ is a bijection, and clearly $(w,v)\in R_{i}$ iff $(f(w),f(v))\in R_{i}$
		for any $i\in I$. Lastly, for $p\in At$, $M,w\vDash p$ iff $w\in V_{A}(p)$
		iff $K(M,w)\vDash p$. Hence, $K(M,w)$ is isomorphic to $(M,w)$.
		\item [{$iii)$}] Follows from $ii)$.
	\end{description}
\end{proof}

\noindent \textbf{Proposition \ref{prop:AttUp-emulates-NoPost}.}
For all $X\in\textsc{noPost}$, there is an equivalent $Y\in\textsc{all}$.
\begin{proof}
	Let $X=(\mathcal{E},e)=((E,Q,pre),e)\in\textsc{noPost}$ and define
	$Y=(\mathcal{A},\text{€}, e')=((E',Q',Q^{*},pre',c),\text{€}, e')\in\textsc{all}$
	such that $E'=E$, $Q'=Q$, $Q^{*}=\{(f,f)\colon f\in E\}$, $pre'=pre$,
	$e'=e$, $c_{i}(\varphi,f)=0$, for all $\varphi\in\mathcal{L}$,
	$f\in E'$, $i\in I$, and $\text{€}_{i}=\top$ for all $i\in I$.
	Let $(M,w)=((W,R,V,A),w)$ be an attention state and $K(M,w)=((W,R,V_{A}),w)$
	be its Kripke rendition.
	
	The proof is by induction on formula complexity.\emph{ }
	
	\noindent \emph{Base: }The base contains three cases $i)$ $\varphi:=p\in\Phi$,
	$ii)$ $\varphi:=q\in\Psi$, and $iii)$ $\varphi:=\top$. 
	\begin{description}
		\item [{$i)\,\,\varphi:=p\in\Phi$}] $K(M)^{X},(w,e)\vDash p$ iff (Def.~\ref{def: standard product updates})
		$(w,e)\in V_{A}^{X}(p)$ iff (Def.~\ref{def: standard product updates})
		$w\in V_{A}(p)$ iff (Def.~\ref{def: semantics for epistemic states})
		$K(M,w)\vDash p$ iff (Prop.~\ref{prop:K(M)-M-Equivalence }) $M,w\vDash p$
		iff (Def.~\ref{def: semantics attention language}) $w\in V(p)$
		iff (Def.~\ref{def: Attention Update}) $(w,e)\in V^{Y}(p)$ iff
		(Def.~\ref{def: semantics attention language}) $M^{Y},(w,e)\vDash p$. 
		\item [{$ii)\,\,\varphi:=q\in\Psi$}] Either $q:=(\alpha_{i}<n)$ or $q:=(\alpha_{i}=n)$.
		$K(M)^{X},(w,e)\vDash\alpha_{i}<n$ iff (Def.~\ref{def: semantics for epistemic states})
		$(w,e)\in V^{X}(\alpha_{i}<n)$ iff (Def.~\ref{def: standard product updates})
		$w\in V_{A}(\alpha_{i}<n)$ iff (Def.~\ref{def:K-transform}) $M,w\vDash\alpha_{i}<n$
		iff (Def.~\ref{def: semantics attention language}) $A_{i}(w)<n$
		iff (Def.~\ref{def: Attention Update} and construction of $c$)
		$A_{i}^{Y}(w,e)=\max\{0,A{}_{i}(w)-c_{i}(\text{€}_{i},e)\}=\max\{0,A{}_{i}(w)-0\}=A_{i}(w)$
		iff $A_{i}^{Y}(w,e)<n$ iff (Def.~\ref{def: semantics attention language})
		$M^{Y},(w,e)\vDash\alpha_{i}<n$. Analogously for $q:=(\alpha_{i}=n)$.
	\end{description}
	\begin{lyxlist}{00.00.0000}
		\item [{$iii)\,\,\varphi:=\top$}] is trivial.
	\end{lyxlist}
	\emph{Step}. Assume $\psi,\chi\in\mathcal{L}$ satisfy the statement.
	The cases for $\varphi:=\neg\psi$ and $\varphi:=\psi\wedge\chi$
	are straightforward. The remaining case is $\varphi:=K_{i}\psi$.
	We first show that $i)\,\,W^{X}=W^{Y}$ and $ii)\,\,R^{X}=R^{Y}$.
	\begin{description}
		\item [{$i)$}] By Prop. \ref{prop:K(M)-M-Equivalence }, $K(M,w)$ is
		isomorphic to $(M,w)$, and by construction, $pre=pre'$. Then for
		$w\in W$, $f\in E=E'$, $M,w\vDash pre(f)$ iff $K(M,w)\vDash pre'(f)$,
		i.e., $(w,f)\in W^{X}$ iff $(w,f)\in W^{Y}$, by Def.~\ref{def: standard product updates}
		and Def.~\ref{def: Attention Update}.
		\item [{$ii)$}] We show $R^{X}=R^{Y}$ by showing $R^{X}\subseteq R^{Y}$
		and $R^{Y}\subseteq R^{X}$.
		\begin{description}
			\item [{$(R^{X}\subseteq R^{Y})$}] If $((w,e),(v,f))\in R_{i}^{Y}$, for
			some $i\in I$, then, by Def.~\ref{def: Attention Update}, $(w,v)\in R_{i}$,
			and either $(e,f)\in(Q_{i}'\cup Q_{i}^{*})$, or $(e,f)\in(Q_{i}'\cup Q_{i}^{*}\setminus[\text{€}_{i}\mid\neg\text{€}_{i}])$,
			i.e., either $(1)$ $(e,f)\in Q_{i}'$, or $(2)$ $(e,f)\in Q_{i}^{*}$,
			or $(3)$ $(e,f)\in Q_{i}^{*}\setminus[\text{€}_{i}\mid\neg\text{€}_{i}]$. 
			\begin{description}
				\item [{$(1)$}] If $(e,f)\in Q_{i}'$, then $(e,f)\in Q_{i}$, as $Q'_{i}=Q_{i}$,
				by def. of $Q'_{i}$. 
				\item [{$(2)$}] If $(e,f)\in Q_{i}^{*}$: Since $Q'_{i}=Q_{i}$, and $Q_{i}$
				is an equivalence relation, then $Q'_{i}$ is an equivalence relation.
				Then, by construction of $Q_{i}^{*}$, if $(e,f)\in Q_{i}^{*}$, then
				$(e,f)\in Q'_{i}$, i.e., $Q^{*}\subseteq Q'_{i}$. As $Q'_{i}=Q_{i}$,
				then $(e,f)\in Q_{i}$. 
				\item [{$(3)$}] If $(e,f)\in Q_{i}^{*}\setminus[\text{€}_{i}\mid\neg\text{€}_{i}]\subseteq Q_{i}^{*}$,
				then by analogous reasoning to $(2)$, $(e,f)\in Q_{i}$. 
				
				As by Def.~\ref{def:K-transform}, $(w,v)\in R_{i}\in(M,w)$ iff
				$(w,v)\in R_{i}\in K(M,w)$, in all three cases $((w,e),(v,f))\in R_{i}^{X}$,
				by Def.~\ref{def: standard product updates}.
			\end{description}
			\item [{$(R^{Y}\subseteq R^{X})$}] If $((w,e),(v,f))\in R_{i}^{X}$, for
			some $i\in I$, then by Def.~\ref{def: standard product updates},
			$(w,v)\in R_{i}$ and $(e,f)\in Q_{i}$. By $Q'_{i}=Q_{i}$, $(e,f)\in Q'_{i}$,
			and by Def.~\ref{def:K-transform} $(w,v)\in R_{i}\in(M,w)$ iff
			$(w,v)\in R_{i}\in K(M,w)$, so $((w,e),(v,f))\in R_{i}^{Y}$, by
			Def.~\ref{def: Attention Update}.
		\end{description}
	\end{description}
	Now consider $\varphi:=K_{i}\psi$. $K(M)^{X},(w,e)\vDash K_{i}\psi$
	iff (Def.~\ref{def: semantics for epistemic states}) for all $(v,f)\in W^{X}$
	such that $((w,e),(v,f))\in R_{i}^{X}$, $K(M)^{X},(v,f)\vDash\psi$
	iff (by $W^{X}=W^{Y}$, $R^{X}=R^{Y}$, and inductive hypothesis)
	for all $(v,f)\in W^{Y}$ such that $((w,e),(v,f))\in R_{i}^{Y}$,
	$M^{Y},(v,f)\vDash\psi$ iff (Def.~\ref{def: semantics attention language})
	$M^{Y},(w,e)\vDash K_{i}\psi$. 
\end{proof}

Hence, for all $\varphi\in\mathcal{L}$, $K(M)^{X},(w,e)\vDash\varphi$
iff $M^{Y},(w,e)\vDash\varphi$.

\noindent \textbf{Proposition \ref{prop:no_Equivalent_noPost}.} There
is a $X\in\textsc{all}$ for which there is no equivalent $Y\in\textsc{noPost}$.
\begin{proof}
	Let $I=\{i\}$, and $(M,w)=((W,R,V,A),w)$ have $W=\{w,v\}$, $(w,v)\in R_{i}$,
	$w\in V(p)$, $A_{i}(w)=A_{i}(v)=k$ for some $1\leq k<N$. Then $M,w\vDash(\alpha_{i}=k)\wedge\neg(\alpha_{i}=k-1)$.
	
	Let $X=(\mathcal{A},\text{€},e)=((E,Q,Q^{*},pre,c),\text{€},e)\in\textsc{all}$
	have $E=\{e,f\}$, $(e,f)\in Q=Q_{i}^{*}$, $pre(e)=pre(f)=p$, $c_{i}(p,e)=c_{i}(p,f)=1$,
	and $\text{€}_{i}=p$. Then, by Def.~\ref{def: Attention Update},
	$M^{X},(w,e)\vDash(\alpha_{i}=k-1)$.
	
	However, there does not exist a $Y=(\mathcal{E},e')=((E',Q',pre',post),e')\in\textsc{noPost}$
	for which $K(M)^{Y},(w,e')\vDash(\alpha_{i}=k-1)$: As $post(e')(\alpha_{i}=k-1)=(\alpha_{i}=k-1)$,
	for all $e'\in E'$, then by Def.~\ref{def: standard product updates},
	$K(M)^{Y},(w,e')\vDash(\alpha_{i}=k-1)$ iff $K(M,w)\vDash(\alpha_{i}=k-1)$.
	As $M,w\vDash\neg(\alpha_{i}=k-1)$, by Prop. \ref{prop:K(M)-M-Equivalence },
	$K(M,w)\vDash\neg(\alpha_{i}=k-1)$. Hence $K(M)^{Y},(w,e)\vDash\neg(\alpha_{i}=k-1)$,
	so $Y$ is not equivalent to $X$.
\end{proof}

\noindent \textbf{Proposition \ref{prop:is_Equivalent_post}.} For
all $X\in\textsc{all}$, there is an equivalent $Y\in\textsc{Post}$.
\begin{proof}
	Let an attention action $X=(\mathcal{A},\text{€}, e)=((E,Q,Q^{*},pre,c),\text{€}, e)\in\textsc{all}$
	be given. We define an epistemic action $Y\in\textsc{Post}$ and show
	$Y$ equivalent to $X$. Let $Y=(\mathcal{E},e)=((E',Q',pre',post),e)$
	with 
	\begin{description}
		\item [{$E'$}] $=\{f_{In}:f\in E,In\in2^{I}\}$;
		\item [{$Q'$}] is such that, for all $i\in I$, for all $In,In'\in2^{I}$,
		$(f_{In},g_{In'})\in Q'_{i}$ iff 
		\begin{description}
			\item [{$i)$}] $(f,g)\in Q_{i}$, or
			\item [{$ii)$}] $(f,g)\in Q_{i}^{*}$ and $In(i)=In'(i)=0$, or 
			\item [{$iii)$}] $(f,g)\in Q_{i}^{*}$ and ($pre(f)\vDash\text{€}_{i}$
			iff $pre(g)\vDash\text{€}_{i}$)
		\end{description}
		\item [{$pre'$}] is such that for all $f\in E$ and all $f_{In}\in E'$,
		\begin{align*}
			pre'(f_{In}) & =pre(f)\wedge{\textstyle \bigwedge_{In(i)=1}}(\alpha_{i}\ge c_{i}(\text{€}_{i},f))\wedge{\textstyle \bigwedge_{In(j)=0}}(\alpha_{j}<c_{j}(\text{€}_{j},f))
		\end{align*}
		\item [{$post$}] is such that for all $f_{In}\in E'$, $i\in I$, for
		all $p\in\Phi$, and all $n\in\{1,...,N\}$
		\begin{align*}
			post(f_{In})(p) & =p\\
			post(f_{In})(\alpha_{i}<0) & =\bot\\
			post(f_{In})(\alpha_{i}=0) & ={\textstyle \bigvee_{n\le c_{i}(\text{€}_{i},f)}}(\alpha_{i}=n)\\
			post(f_{In})(\alpha_{i}=n) & =next_{f}(\alpha_{i}=n)\\
			post(f_{In})(\alpha_{i}<n) & ={\textstyle \bigvee_{0\le j<n}}next_{f}(\alpha_{i}=j)
		\end{align*}
		where (see below for intuition):
		\begin{flalign*}
			next_{f}(\alpha_{i}=n):= & ((\alpha_{i}=n)\wedge(\alpha_{i}=\max\{0,n-c_{i}(\text{€}_{i},f)\}))\vee\\
			& (\neg(\alpha_{i}=n)\wedge(\alpha_{i}=\min\{n+c_{i}(\text{€}_{i},f),N\})
		\end{flalign*}
	\end{description}
	We emulate the change of attention atoms by the postconditions involving
	$next_{f}(\alpha_{i}=n)$, intuitively read ``$n$ will be the attention
	value of agent $i$ after the event $f$''. The first disjunct reads
	``$n$ is $i$'s current attention value and $c_{i}(\text{€}_{i},f)$\textemdash the
	cost of what $i$ pays attention to in $f$\textemdash is zero''
	and the second ``$n$ is not the current attention value, but $n$
	is the current value minus $c_{i}(\text{€}_{i},f)$''.
	
	We show $X$ and $Y$ equivalent by showing that for any attention
	state $(M,w)$ and its Kripke rendition $K(M,w)$, $(M,w)^{X}\vDash\varphi$
	iff $K(M,w)^{Y}\vDash\varphi$, for all $\varphi\in\mathcal{L}$. 
	For any attention state $(M,w)=((W,R,V,A),w)$ and its Kripke rendition
	$K(M,w)=((W,R,V_{A}),w)$, $(M,w)^{X}$ is isomorphic to $K(M,w)^{Y}$.
	
	Recall that we let $M^{X}$ denote $(W^{X},R^{X},V^{X},A^{X})$ and
	$K(M)^{Y}=(W^{Y},R^{Y},V_{A}^{Y})$.
	To establish the claim, it must be shown that (1) there exists a bijection
	$\mathfrak{g}:W^{X}\longrightarrow W^{Y}$; (2) for all $x,y\in W^{X}$,
	all $i\in I$, $(x,y)\in R_{i}^{X}$ iff $(\mathfrak{g}(x),\mathfrak{g}(y))\in R_{i}^{Y}$;
	(3) for all $(v,f)\in W^{X}$, $M^{X},(v,f)\vDash p$ iff $K(M)^{Y},\mathfrak{g}(v,f)\vDash p$,
	for all $p\in At=\Phi\cup\Psi$.
	\begin{description}
		\item [{(1)}] Let $\mathfrak{g}:S(W^{X})\longrightarrow S(W^{Y})$ with
		$\mathfrak{g}(v,f)=(v,f_{In})$. It is shown that $\mathfrak{g}$
		is a bijection by showing that $i)$ $\mathfrak{g}$ is well-defined;
		$ii)$ $\mathfrak{g}$ is injective; $iii)$ $\mathfrak{g}$ is surjective. 
		\begin{description}
			\item [{$i)$}] $\mathfrak{g}$ is well-defined: $(v,f)\in W^{X}$ iff
			(Def.~\ref{def: Attention Update}) $M,v\vDash pre(f)$ iff (Prop.~\ref{prop:K(M)-M-Equivalence })
			$K(M,v)\vDash pre(f)$. Moreover, for each $i\in I$, either $M,v\vDash(\alpha_{i}\ge c_{i}(\text{€}_{i},f))$
			or $M,v\vDash\alpha_{i}<c_{i}(\text{€}_{i},f)$, but not both.
			Hence, there exists a unique $In\in2^{I}$ for which $M,w\vDash(\alpha_{i}\ge c_{i}(\text{€}_{i},f))$
			iff $In(i)=1$. By Prop.~\ref{prop:K(M)-M-Equivalence }, the same
			holds for $K(M,v)$. By construction of $E'$ and as $K(M,v)\vDash pre(f)$,
			there thus exists a unique $In\in2^{I}$ for which $K(M,w)\vDash pre'(f_{In})$.
			Hence, $(v,f_{In})\in W^{Y}$ exists and is unique.
			\item [{$ii)$}] $\mathfrak{g}$ is injective: Let $(v,f),(u,g)\in W^{X}$,
			such that $(v,f)\not=(u,g)$. Then $\mathfrak{g}(v,f)=(v,f_{In})$
			and $\mathfrak{g}(u,g)=(u,g_{In'})$ for some $In,In'\in2^{I}$. As
			$(v,f)\not=(u,g)$, either $v\neq u$ or $f\neq g$. Hence $(v,f_{In})\neq(u,g_{In'})$
			as either $v\neq u$ or $f_{In}\neq g_{In'}$.
			\item [{$iii)$}] $\mathfrak{g}$ is surjective: Let $(v,f_{In})\in W^{Y}$.
			Then $K(M,v)\vDash pre'(f_{In})$. By Prop.~\ref{prop:K(M)-M-Equivalence },
			also $M,v\vDash pre(f_{In})$, and hence $M,v\vDash pre(f)$. Hence
			$(v,f)\in W^{X}$. By definition of $\mathfrak{g}$, $\mathfrak{g}(v,f)=(v,f_{In}).$
		\end{description}
		Thus the map $\mathfrak{g}$ is bijective.
		\item [{(2)}] $((w,f),(v,g))\in R_{i}^{X}$ iff (Def.~\ref{def: Attention Update})
		$(w,v)\in R_{i}$ and either $i)$ $(f,g)\in Q_{i}$, or $ii)$ $(f,g)\in Q_{i}^{*}$
		and $A_{i}(w)<c_{i}(\text{€}_{i},f)$, or $iii)$ $(f,g)\in Q_{i}^{*}$
		and ($pre(f)\vDash\text{€}_{i}$ iff $pre(g)\vDash\text{€}_{i}$).
		Then,
		\begin{description}
			\item [{$i)$}] if $(f,g)\in Q_{i}$: By construction of $E'$, $(f_{In},g_{In'})\in Q'_{i}$
			for all $In,In'\in2^{I}$, implying, by Def.~\ref{def: standard product updates},
			$((w,f_{In}),(v,g_{In'}))\in R_{i}^{Y}$, i.e., $(\mathfrak{g}(w,f),\mathfrak{g}(v,g))\in R_{i}^{Y}$;
			\item [{$ii)$}] if $(f,g)\in Q_{i}^{*}$ and $A_{i}(w)<c_{i}(\text{€}_{i},f)$:
			Recall that as $(w,v)\in R_{i}$, $A_{i}(w)=A_{i}(v)$ and as $(f,g)\in Q_{i}^{*}$,
			$c_{i}(\text{€}_{i},f)=c_{i}(\text{€}_{i},g)$. As \textsc{$(w,f)\in W^{X}$},
			also $(w,f_{In})\in W^{Y}$ for some $In$ with $In(i)=0$. Similarly,
			$(v,g_{In'})\in W^{Y}$ for some $In'$ with $In'(i)=0$. Hence $(f_{In},g_{In'})\in Q_{i}'$.
			By Def.~\ref{def: standard product updates}, $((w,f_{In}),(v,g_{In'}))\in R_{i}^{Y}$,
			i.e., $(\mathfrak{g}(w,f),\mathfrak{g}(v,g))\in R_{i}^{Y}$;
			\item [{$iii)$}] $(f,g)\in Q_{i}^{*}$ and ($pre(f)\vDash\text{€}_{i}$
			iff $pre(g)\vDash\text{€}_{i}$): By (1) above, as $(w,f),(v,g)\in W^{X}$,
			also $(w,f_{In}),(v,g_{In'})\in W^{Y}$ for some $In,In'$$\in2^{I}$.
			Then directly, $(f_{In},g_{In'})\in Q_{i}'$. As $(w,v)\in R_{i}$,
			also $((w,f_{In}),(v,g_{In'}))\in R_{i}^{Y}$, i.e., $(\mathfrak{g}(w,f),\mathfrak{g}(v,g))\in R_{i}^{Y}$.
		\end{description}
		Hence $((w,f),(v,g))\in R_{i}^{X}$ iff $(\mathfrak{g}(w,f),\mathfrak{g}(v,g))\in R_{i}^{Y}$.
		\item [{(3)}] Let $(v,f)\in W^{X}$. There are two main cases: $p\in\Phi$
		and $q\in\Psi$.
		\begin{description}
			\item [{$p\in\Phi$.}] $M^{X},(v,f)\vDash p$ iff (Def.~\ref{def: Attention Update})
			$M,v\vDash p$ iff (Prop.~\ref{prop:K(M)-M-Equivalence }) $K(M,v)\vDash p$
			iff (def. of $post\in E'$ and Def.~\ref{def: standard product updates})
			$K(M)^{Y},\mathfrak{g}(v,f)\vDash p$.
			\item [{$q\in\Psi$.}] There are four subcases: $i)$ $q:=(\alpha_{i}=0)$;
			$ii)$ $q:=(\alpha_{i}=n)$, for $n>0$; $iii)$ $q:=(\alpha_{i}<0)$;
			$iv)$ $q:=(\alpha_{i}<n)$, for $n>0$. 
			\begin{description}
				\item [{$i)$}] $M^{X},(v,f)\vDash(\alpha_{i}=0)$ iff (Def.~\ref{def: semantics attention language})
				$A_{i}^{X}(v,f)=0$ iff (Def.~\ref{def: Attention Update}) $A_{i}^{X}(v,f)=\max\{0,A{}_{i}(v)-c_{i}(\text{€}_{i},f)\}=0$
				iff $A_{i}(v)=n$ with $n\le c_{i}(\text{€}_{i},f)$ and $M,v\vDash(\alpha_{i}=n)$
				iff (Prop.~\ref{prop:K(M)-M-Equivalence }) $K(M,v)\vDash(\alpha_{i}=n)$
				with $n\le c_{i}(\text{€}_{i},f)$, i.e., $K(M,v)\vDash\bigvee_{n\le c_{i}(\text{€}_{i},f)}(\alpha_{i}=n)$
				iff (def. of $post$ and Def.~\ref{def: standard product updates})
				$K(M)^{Y},(v,f)\vDash(\alpha_{i}=0)$. 
				\item [{$ii)$}] $M^{X},(v,f)\vDash(\alpha_{i}=n)$ with $n>0$ iff (Def.~\ref{def: semantics attention language})
				$A_{i}(v,f)^{X}=n>0$ iff (Def.~\ref{def: Attention Update}) $A_{i}^{X}(v,f)=\max\{0,A{}_{i}(v)-c_{i}(\text{€}_{i},f)\}=n>0$.
				Two cases: $(1)$ $c_{i}(\text{€}_{i},f)=0$, or $(2)$ $c_{i}(\text{€}_{i},f)>0$. 
				\begin{description}
					\item [{$(1)$}] $c_{i}(\text{€}_{i},f)=0$: Then $A_{i}^{X}(v,f)=\max\{0,A{}_{i}(v)-0\}=A_{i}(v)=n$
					iff (Def.~\ref{def: semantics attention language}) $M,v\vDash(\alpha_{i}=n)$
					and $M,v\vDash(\alpha_{i}=\max\{0,n-c_{i}(\text{€}_{i},f)\})$
					iff (Prop.~\ref{prop:K(M)-M-Equivalence } and Def.~\ref{def: semantics for epistemic states})
					$K(M,v)\vDash(\alpha_{i}=n)\wedge(\alpha_{i}=\max\{0,n-c_{i}(\text{€}_{i},f)\})$
					iff (Def.~\ref{def: semantics for epistemic states} and def. of
					$next_{f}$) $K(M,v)\vDash next_{f}(\alpha_{i}=n)$ iff (def. of $post$
					in $E'$ and Def.~\ref{def: standard product updates}) $K(M)^{Y},(v,f)\vDash(\alpha_{i}=n)$,
					for $n>0$. 
					\item [{$(2)$}] $c_{i}(\text{€}_{i},f)>0$: Then $A_{i}(v)=k$,
					with $n<k\le N$, and $k=\min\{n+c_{i}(\text{€}_{i},f),N\}$
					iff (Def.~\ref{def: semantics attention language}) $M,v\vDash(\alpha_{i}=\min\{n+c_{i}(\text{€}_{i},f),N\})$
					and $M,v\vDash\neg(\alpha_{i}=n)$ iff (Prop.~\ref{prop:K(M)-M-Equivalence }
					and Def.~\ref{def: semantics for epistemic states}) $K(M,v)\vDash(\alpha_{i}=\min\{n+c_{i}(\text{€}_{i},f),N\})\wedge\neg(\alpha_{i}=n)$
					iff (Def.~\ref{def: semantics for epistemic states} and def. of
					$next_{f}$) $K(M,v)\vDash next_{f}(\alpha_{i}=n)$ iff (def. of $post$
					in $E'$ and Def.~\ref{def: standard product updates}) $K(M)^{Y},(v,f)\vDash(\alpha_{i}=n)$,
					for $n>0$. 
				\end{description}
				\item [{$iii)$}] $(M)^{X},(v,f)\vDash(\alpha_{i}<0)$ iff (Def.~\ref{def: semantics attention language})
				$A_{i}(v)^{X}<0$, which can never be the case, so $(M)^{X},(v,f)\vDash\bot$
				iff (Def.~\ref{def: Attention Update}) $M,v\vDash\bot$ iff (Prop.~\ref{prop:K(M)-M-Equivalence })
				$K(M,v)\vDash\bot$ iff (def. of $post$ in $E'$ and Def.~\ref{def: standard product updates})
				$K(M)^{Y},(v,f)\vDash(\alpha_{i}<0)$.
				\item [{$iv)$}] $(M)^{X},(v,f)\vDash(\alpha_{i}<n)$, for $n>0$ iff (Def.~\ref{def: semantics attention language})$A_{i}(v,f)^{X}<n>0$
				iff (Def.~\ref{def: Attention Update}) $A_{i}^{X}(v,f)=\max\{0,A{}_{i}(v)-c_{i}(\text{€}_{i},f)\}=j$,
				with $0\le j<n$. Two cases: $(1)$ $c_{i}(\text{€}_{i},f)=0$;
				or $(2)$ $c_{i}(\text{€}_{i},f)>0$. 
				\begin{description}
					\item [{$(1)$}] $c_{i}(\text{€}_{i},f)=0$: Then $A_{i}^{X}(v,f)=\max\{0,A{}_{i}(v)-0\}=A_{i}(v)=j$
					with $0\le j<n$ iff (Def.~\ref{def: Attention Update} and Def.~\ref{def: semantics attention language})
					$M,v\vDash\bigvee_{0\le j<n}((\alpha_{i}=j)\wedge(\alpha_{i}=\max\{0,j-c_{i}(\text{€}_{i},f)\}))$
					iff (Prop.~\ref{prop:K(M)-M-Equivalence }) $K(M,v)\vDash\bigvee_{0\le j<n}((\alpha_{i}=j)\wedge(\alpha_{i}=\max\{0,j-c_{i}(\text{€}_{i},f)\}))$
					iff (Def.~\ref{def: semantics for epistemic states} and def. of
					$next_{f}$) $K(M,v)\vDash\bigvee_{0\le j<n}next_{f}(\alpha_{i}=j)$
					iff (def. of $post$ in $E'$ and Def.~\ref{def: standard product updates})
					$K(M)^{Y},(v,f)\vDash(\alpha_{i}<n)$, for $n>0$.
					\item [{$(2)$}] $c_{i}(\text{€}_{i},f)>0$: Then $A_{i}(v)=k$, with $k=\min\{j+c_{i}(\text{€}_{i},f),N\}$
					and $0\le j<n$ iff
					
					(Def.~\ref{def: semantics attention language}) $M,v\vDash\bigvee_{0\le j<n}(\neg(\alpha_{i}=j)\wedge(\alpha_{i}=\min\{j+c_{i}(\text{€}_{i},f),N\}))$
					iff (Prop.~\ref{prop:K(M)-M-Equivalence }) $K(M,v)\vDash\bigvee_{0\le j<n}(\neg(\alpha_{i}=j)\wedge(\alpha_{i}=\min\{j+c_{i}(\text{€}_{i},f),N\}))$
					iff (Def.~\ref{def: semantics for epistemic states} and def. of
					$next_{f}$) $K(M,v)\vDash{\textstyle \bigvee_{0\le j<n}}next_{f}(\alpha_{i}=j)$
					iff (def. of $post$ in $E'$ and Def.~\ref{def: standard product updates})
					$K(M)^{Y},(v,f)\vDash(\alpha_{i}<n)$, for $n>0$. 
					
				\end{description}
			\end{description}
		\end{description}
	\end{description}
	Hence, $(M,w)^{X}$ is isomorphic to $K$$(M,w)^{Y}$, which by standard
	arguments implies that for all $\varphi\in\mathcal{L}$, $(M,w)^{X}\vDash\varphi$
	iff $K$$(M,w)^{Y}\vDash\varphi$. Hence, $X$ is equivalent to $Y$.
\end{proof}

\noindent \textbf{Proposition \ref{prop:Post_richer_than_All}.} There
exists a $X\in\textsc{Post}$ for which there does not exist an equivalent
$Y\in\textsc{all}$.
\begin{proof}
	Let $I=\{i\}$ and let $(M,w)=((W,R,V,A),w)$ be an attention state
	with $W=\{w\}$, $(w,w)\in R_{i}$, $w\in V(p)$ for $p\in\Phi$,
	$A_{i}(w)=0$. Let $K(M,w)$ be its Kripke rendition. Then both $(M,w)$
	and $K(M,w)$ satisfy $p$.
	
	Let be given the epistemic action $X=(\mathcal{E},e')=((E',Q',pre',post),e')\in\textsc{Post}$
	have $E'=\{e'\}$, $Q'=\{(e',e')\}$, $pre'(e')=\top$, $post(e',p)=\neg p$,
	for all $p\in\Phi$, and $post(e',q)=q$, for all $q\in\Psi$. Then
	$K(M)^{X},(w,e')\vDash\neg p$.
	
	Let be given any attention action $Y=(\mathcal{A},\text{€},e)=((E,Q,Q^{*},pre),\text{€},e)$.
	Then $M^{Y},(w,e)\vDash p$: As by Def.~\ref{def: standard product updates},
	$M^{Y},(w,e)\vDash p$ iff $M,w\vDash p$. 
	
	Hence, $Y$ is not equivalent to $X$.
\end{proof}

\end{document}